\documentclass[11pt]{asaproc}

\usepackage{natbib}
\bibliographystyle{apalike}

\usepackage{graphicx}
\usepackage{amsmath, amssymb, amsthm}  % for notation macros
\usepackage{hyperref}  % clickable references
\usepackage{subfig}  % \subfloat in \figures
\usepackage[symbol, perpage]{footmisc}  % [symbol] symbolic footnote markers, [perpage] reset counter each page
\usepackage[many]{tcolorbox}  % for special theorem/proof environments
\usepackage{relsize}  % for larger/smaller math symbols via \mathlarger \mathsmaller
\usepackage{mathtools}  % special matrix alignment
\usepackage{cprotect}
\usepackage[left=1in, right=1in, top=1in, bottom=1in]{geometry}
\usepackage[boxruled, algosection]{algorithm2e}  % [boxruled] box around algorithm, [algosection] number by section
\usepackage{chngcntr}  % allow counters to be sectional
\usepackage{abstract}  % abstract formatting
\usepackage{array}
\usepackage{enumitem}  % roman numerals in \begin{enumerate}
\usepackage{algorithmic}

%\usepackage{mathtime}

%%UNCOMMENT following line if you have package
\usepackage{times}

%%%%%%%%%%%%%%%%%%%%%%%%%%%
% Theorem environments
%%%%%%%%%%%%%%%%%%%%%%%%%%%

\newtheorem{proposition}{Proposition}

\newtheorem{lemma}{Lemma}

%%%%%%%%%%%%%%%%%%%%%%%%%%%
% Special proof environment that draws a line
% down the lhs to clearly differentiate between
% proof and not proof content.
%%%%%%%%%%%%%%%%%%%%%%%%%%%
\tcolorboxenvironment{proof}{
  blanker,
  before skip=\topsep,
  after skip=\topsep,
  borderline west={0.4pt}{0.4pt}{black},
  breakable,
  left=12pt,
  right=12pt, % I'd avoid this
}

%%%%%%%%%%%%%%%%%%%%%%%%%%%
% Macro definitions: convenient math notation
%%%%%%%%%%%%%%%%%%%%%%%%%%%
%% I try to keep macro definitions independent of other custom macros
% font macros
\newcommand{\bm}[1]{\boldsymbol{#1}}
\newcommand{\mb}[1]{\mathbb{#1}}
\newcommand{\mc}[1]{\mathcal{#1}}

% probability
\newcommand{\given}{\,\vert\,}

% sets
\newcommand{\st}{\,:\,}
\newcommand{\set}[1]{\left\{ #1 \right\}}

\newcommand{\real}{\mathbb{R}}

% operators
\newcommand{\trace}{\text{tr}}

\newcommand{\rank}{\text{rank}}

\newcommand{\sign}{\text{sign}}
\newcommand{\dnorm}[1]{\left\Vert #1 \right\Vert}
\newcommand{\norm}[1]{\Vert #1 \Vert}

% optimization problems

\DeclareMathOperator*{\argmax}{\text{argmax}}
\DeclareMathOperator*{\minimize}{\text{minimize}}
\DeclareMathOperator*{\maximize}{\text{maximize}}

% miscellaneous macros

\renewcommand{\tilde}[1]{\widetilde{#1}}
	% used to label specific equation in align*

% special overset arc: use as \arc{...}
\makeatletter
\DeclareFontFamily{U}{tipa}{}
\DeclareFontShape{U}{tipa}{m}{n}{<->tipa10}{}
\newcommand{\arc@char}{{\usefont{U}{tipa}{m}{n}\symbol{62}}}
\newcommand{\arc}[1]{\mathpalette\arc@arc{#1}}
\newcommand{\arc@arc}[2]{
	\sbox0{$\m@th#1#2$}
	\vbox{
		\hbox{\resizebox{\wd0}{\height}{\arc@char}}
		\nointerlineskip
		\box0
	}
}
\makeatother

\graphicspath{{../second/figures/}}  % TODO: rearrange folders

%\numberwithin{equation}{section}  % change equation numbering scheme to be sectional
\counterwithin{figure}{section}
\counterwithin{proposition}{section}
\counterwithin{lemma}{section}
\counterwithin{equation}{section}

\interfootnotelinepenalty=10000  % keep the footnote on one page

%%% \citep for version of \citep with brackets instead of parentheses
% \makeatletter
% 	\def\citep{\@ifnextchar[{\@with}{\@without}}
% 	\def\@with[#1]#2{[\citealt[#1]{#2}]}
% 	\def\@without#1{[\citealt{#1}]}
% \makeatother

%%%%%%%%%%%%%%%%%%%%%%%%%%%
% Document body
%%%%%%%%%%%%%%%%%%%%%%%%%%%
\title{
	ON THE HUMAN-RECOGNIZABILITY PHENOMENON OF ADVERSARIALLY TRAINED DEEP IMAGE CLASSIFIERS
}%On The Interpretability of Adversarially Trained Models}

\cprotect\author{
\textbf{Jonathan Helland}\footnote{Corresponding author.} \qquad \textbf{Nathan VanHoudnos} \\
 Software Engineering Institute \\
Carnegie Mellon University \\
\{\verb|jwhelland|, \verb|nmvanhoudnos|\}\verb|@sei.cmu.edu|
%Jonathan Helland \\
%Software Engineering Institute, CMU \\
%\verb|jwhelland@sei.cmu.edu|
%\and 
%Nathan VanHoudnos \\%\thanks{Sofware Engineering Institute, Carnegie Mellon University} \and Nathan VanHoudnos\footnotemark[1]
%Software Engineering Institute, CMU \\
%\verb|nmvanhoudnos@sei.cmu.edu|
}

\begin{document}

%%%%%%%%%%%%%%%%%%%%%%%%%%%
% header
%%%%%%%%%%%%%%%%%%%%%%%%%%%
\pagestyle{plain}
\maketitle

\begin{abstract}
In this work, we investigate the phenomenon that robust image classifiers have human-recognizable
features – often referred to as interpretability – as revealed through the input gradients of their score
functions and their subsequent adversarial perturbations. In particular, we demonstrate that state-of-the-art methods for adversarial training incorporate two terms – one that orients the decision boundary via
minimizing the expected loss, and another that induces smoothness of the classifier’s decision surface
by penalizing the local Lipschitz constant. Through this demonstration, we provide a unified discussion
of gradient and Jacobian-based regularizers that have been used to encourage adversarial robustness in
prior works. Following this discussion, we give qualitative evidence that the coupling of smoothness and
orientation of the decision boundary is sufficient to induce the aforementioned human-recognizability
phenomenon.
\end{abstract}

%%%%%%%%%%%%%%%%%%%%%%%%%%%
% body
%%%%%%%%%%%%%%%%%%%%%%%%%%%
\section{Introduction}

An adversarial example is often defined as ``an input to a ML model that is intentionally designed by an attacker to fool the model into producing an incorrect output'' \citep{goodfellow2017is_attacking}. 
\citet{tsipras_robustness_2019} observed that the adversarial examples for adversarially trained image classifiers were clearly human-recognizable as particular classes in the training data -- a phenomenon that does not generally occur for non-adversarially trained image classifiers.

\begin{figure}[!h]
    \centering
    \includegraphics[width=.75\textwidth, trim=0 0 0 .2em, clip]{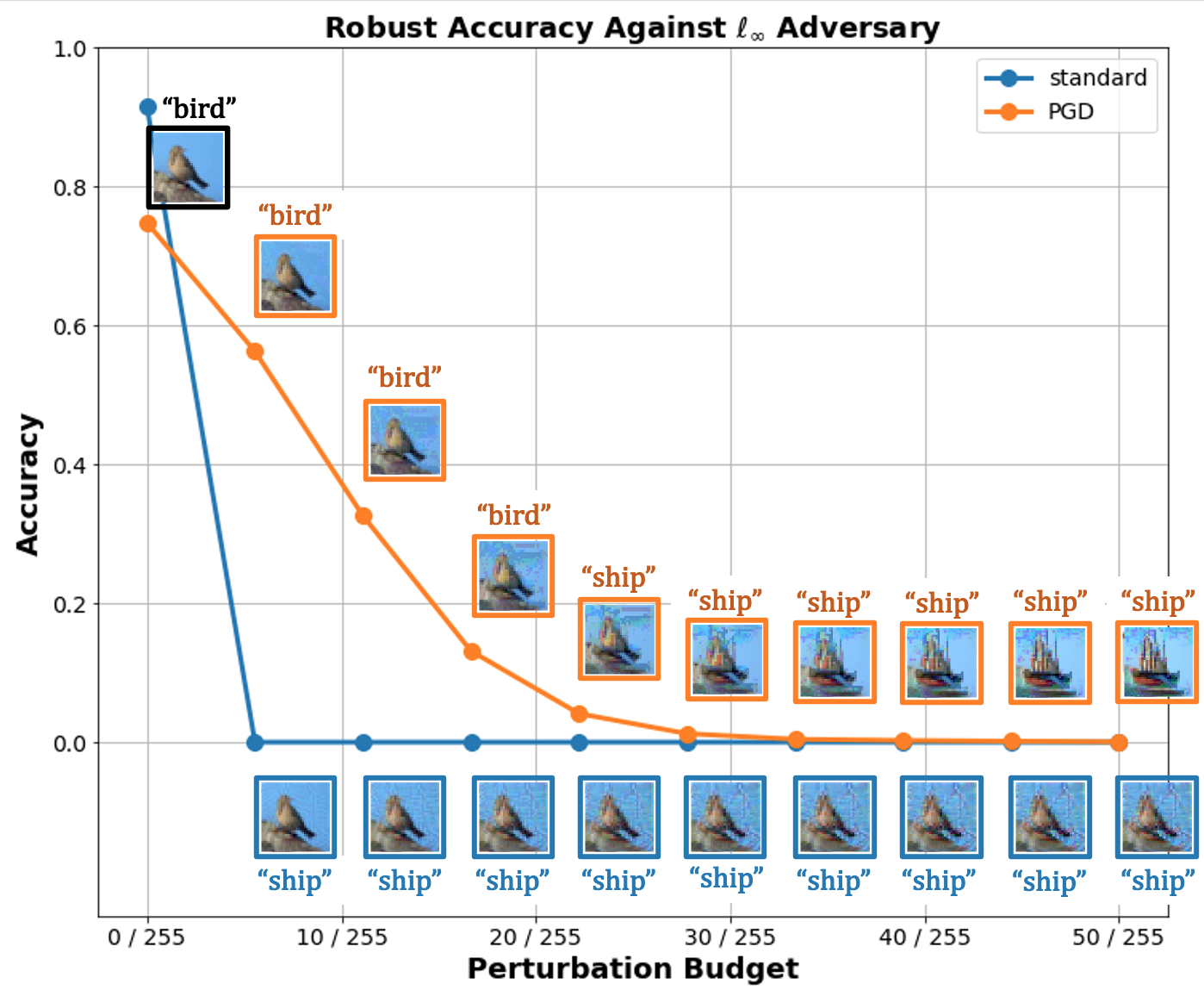}
    \caption{
        Illustration of the \textbf{human-recognizability} phenomenon.
        Adversarially (orange) and non-adversarially (blue) trained pre-activation ResNet-18 models \citep{he2016identity} trained on CIFAR-10 \citep{krizhevsky2009learning}. 
        The robust accuracy against an untargeted $\ell_\infty$ adversary are shown alongside targeted adversarial perturbations of a bird image from the test set using varying perturbation budgets.
        %See the text for more details.
    }
    \label{fig:motivation}
\end{figure}

Figure \ref{fig:motivation} illustrates this human-recognizability phenomenon by comparing the adversarial perturbations for two pre-activation ResNet18 models \citep{he2016identity} trained adversarially (orange, ``PGD'') and non-adversarially (blue, ``standard'') on the CIFAR-10 dataset \citep{krizhevsky2009learning}.
% The adversarially trained model uses the Madry adversarial training algorithm \citep{madry2017towards} with an $\ell_\infty$-adversary with budget $\epsilon = 8 / 255$ (see Section \ref{sec:cifar10 recognizability experiments} for more training details).
% The curves show the robust test-set accuracy of the model evaluated against an $\ell_\infty$ adversary with increasing perturbation budget using untargeted attacks with a stepsize of $\alpha = 2 / 255$, 50 iterations, and 10 random restarts (see Section \ref{sec:background} for more details on adversarial training and see Section \ref{sec:cifar10 recognizability experiments} for more details on the model training itself).
The images show targeted adversarial perturbations towards the \textit{ship} class of an image taken from the \textit{bird} class of the test-set, where each perturbation budget value indicates a separate perturbation starting from the same bird image (outlined in black).
The adversarially trained model produces perturbations that gradually become human-recognizable as the target class, whereas the non-adversarially trained model remains human-recognizable as a noisy version of the original class.

% Prior work has shown an interesting side effect of defending models against $\ell_p$ adversarial examples, namely that targeted $\ell_p$ adversarial examples generated against defended models are human-recognizable \citep{tsipras_robustness_2019}. 
% Figure \ref{fig:motivation}, illustrates this human-recognizability phenomenon by comparing a defended and an undefended models in two ways. 
% First, the figure displays the robustness of the model by graphing how the accuracy of the model degrades as increasing levels of targeted $\ell_p$ noise are added to the seed images. 
% It is clear from this portion of the figure that the accuracy of the undefended model falls quickly, while the defended model is more robust. 
% The second aspect of the figure visually compares what the two sets of adversarial examples look like for several levels of $\ell_p$ noise for each of the models. 
% Note that in this example, the adversarial examples generated by the undefended model continue to resemble the seed image of horse, even for large amounts of $\ell_p$ noise, while the adversarial examples generated by the defended model are recognizable as the target class of a bird. 

In this paper, we study adversarial examples from the perspective of human-recognizability.
% We are particularly interested in better understanding when this phenomenon occurs in the form of a sufficient  condition, which we hypothesize to be a combination of model smoothness and orientation of the decision boundary.
% % This is an important concern, because there exist situations in machine learning where different aspects of the data may need to be kept private, either by regulation, such as healthcare, or business interest, such as data that is expensive to collect. 
% To restate, we 
We specifically ask the following question:

\begin{itemize}
\item[] \textit{
    What is a sufficient condition for the training method of a Convolutional Neural Network (CNN) image classifier such that adversarial perturbations against that CNN model are recognizable to humans?
}
\end{itemize}

\noindent
In answering this question, we make two contributions:

\begin{enumerate}
    \item A unified discussion of adversarial training methods around the idea of smoothness regularization. 
    This includes a generalization of Jacobian-based regularizers to $\ell_p$ with $p \geq 2$ adversaries and a theoretical discussion of the smoothness encouraged by this regularizer via the local Lipschitz constant of the classifier.
    
    \item An empirical demonstration that smoothness regularization is sufficient for human-recognizable adversarial examples.
    Here we provide qualitative evidence that smoothness regularizers promoting lesser degrees of adversarial robustness provide less human-recognizable adversarial perturbations, suggesting a potential trade-off between robustness and human-recognizability.
\end{enumerate}

Beyond demonstrating that the adversarial examples from smoothness-regularized models are human-recognizable, we do not consider the problem of data privacy in this paper that was suggested by \citet{mejia_robust_2019}. 
Rather, we take as a given that if human-recognizable patterns can be generated from a model without access to the training data, then there will exist privacy attacks capable of violating some security policy for a practical system. 

We conclude the paper with a brief discussion of the implications of the suggested trade-off between models that are robust to $\ell_p$ adversaries and the human-recognizability of their adversarial perturbations. 
Namely, that machine learning applications that require both robustness and data privacy require careful thought and further research.

\section{Background \& Related Work}\label{sec:background}

% What about the prior work does the reader need to know?
% Why do they need to know it? 
    \subsection{Background}
    We restrict our attention to models that perform image classification tasks, the setting in which adversarial examples were first observed \citep{szegedy2013intriguing}.
    We now discuss both attacks (generation of adversarial examples) and defenses (resistance to adversarial examples) that we will use in the rest of this work.

        \subsubsection{Attacks}
        Adversarial examples in the sense of untargeted and targeted evasion attacks are based on the following optimization problems.
        Take a classifier $f \st \mc{X} \to (0,1)^K$ over $K$-many classes with data sample $(\bm{x},y)~\sim~\mc{D}$ from the data generating distribution $\mc{D}$.
        We assume that the scores sum to one: $\sum_y f(\bm{x})[y] = 1$ for any input $\bm{x}$ (e.g. softmax outputs).
        For an untargeted attack, our goal is to find a solution to the constrained problem
        \begin{equation}\label{eq:untargeted attack}
            \max_{\bm{\delta} \in B_\epsilon} \mc{L}(\bm{e}_y, f(\bm{x} + \bm{\delta})),
        \end{equation}
        where $B_\epsilon$ is a constraint set (often an $\epsilon$-radius $\ell_p$-ball) and $\mc{L}$ is the loss function that consumes the one-hot label vector $\bm{e}_y \in [0,1]^K$.
        In words, find a perturbed image that changes the model's prediction to any other class.
        A targeted attack takes some other label $\tilde{y} \neq y$ and instead solves the problem
        \begin{equation}\label{eq:targeted attack}
            \min_{\bm{\delta} \in B_\epsilon} \, \mc{L}(\bm{e}_{\tilde{y}}, f(\bm{x} + \bm{\delta})),
        \end{equation}
        which finds a perturbed image that changes the model's prediction to a specific class. 
        % -- this is a more challenging problem in the multi-class setting.
        
        The Madry Projected Gradient Descent (PGD) algorithm \citep{madry2017towards} is one of the most widely used methods for generating targeted and untargeted adversarial examples, and is based on the following iterative procedure.
        For some initial point $\bm{x}_0$ and some label $y$, it applies PGD to the above optimization problems:
    	\begin{equation}\label{eq:madry pgd iterates}
    		\bm{x}_{t+1} = \mathcal{P}\left( \bm{x}_t + \alpha \, \underset{ \norm{\bm{u}} \leq 1 }{ \argmax } \ \left\langle \bm{u}, \, \nabla_{\bm{x}} \mc{L} \left( \bm{e}_y, \, f_{\bm{\theta}}(\bm{x}_t) \right) \right\rangle \right)
    	\end{equation}
        for an arbitrary norm $\norm{\cdot}$ (often an $\ell_p$-norm, but works like \citep{wong2019wasserstein} consider other norms) and stepsize $\alpha$. 
        For an untargeted attack, we take $\alpha > 0$ and $y$ as the model's prediction for $x$, and for a targeted attack, we take $\alpha < 0$ with some particular $y$ chosen \textit{a priori}.
        The operator $\mathcal{P}$ projects iterates onto the constraint set $B_\epsilon = B_\epsilon(\bm{x}_0) \cap P$, where $B_\epsilon(\bm{x}_0)$ is canonically a radius-$\epsilon$ norm-ball centered at $\bm{x}_0$ and the set $P \equiv \set{ \bm{x} \st \bm{x}[i] \in [a,b] }$ for $a < b$, $a, b \in \real$ is the set of images with valid pixel values.
        
        % Madry PGD has two variants depending on the choice of $y$ and $\alpha$. 
        % The untargeted version of the attack takes $y$ as the ground truth label of $\bm{x}_0$ with stepsize $\alpha > 0$ so that the maximization problem perturbs the iterates to any other class. 
        % The targeted version takes $y$ to be some other label of interest with $\alpha < 0$ so that the maximization drives the iterates to the target class. 
        % Note that both Madry Adversarial Training and TRADES use an internal, untargeted adversary to provided robustness to both untargeted and targeted external adversaries. 

        \subsubsection{Defenses}
        % We consider a state of the art attack for generating adversarial examples, the Madry Projected Gradient Descent (Madry PGD) attack \citep{madry2017towards}, applied to both defended and undefended models. 
        We consider an undefended model as one whose learning task is only concerned with maximizing accuracy -- specifically, the optimization problem that minimizes the expected loss over the training data without any further regularization:
    	\begin{equation}\label{eq:undefended training}
			\minimize_{f \in \mc{F}} \ \mb{E}_{(\bm{x}, y) \sim \mc{D}} \left[  \mc{L}(\bm{e}_y, f(\bm{x})) \right].
		\end{equation}
        Here, the classifier $f$ belongs to a family of functions $\mc{F}$ (e.g. neural networks of a fixed architecture).
        It is well known that on common image classification tasks, neural networks trained via algorithms based on Eq. (\ref{eq:undefended training}) are highly susceptible to attacks like Eq. (\ref{eq:untargeted attack}) and Eq. (\ref{eq:targeted attack}) (see the blue curve in Figure \ref{fig:motivation} for illustration of this).
        % ; by susceptible, we mean that the constraint set $B_\epsilon(\bm{x}_0)$ can be small enough that any $\bm{x} \in B_\epsilon(\bm{x}_0)$ is perceptually indistinguishable from $\bm{x}_0$ itself.
        
        The first defense against adversarial examples that we consider is Madry adversarial training \citep{madry2017towards}, which is based on the canonical adversarial learning program \citep{goodfellow2014explaining}
        \begin{equation}\label{eq:canonical adversarial training}
        	\minimize_{f \in \mc{F}} \ \mb{E}_{(\bm{x}, y) \sim \mc{D}} \left[ \max_{\bm{\delta} \in B_\epsilon} \, \mc{L}(\bm{e}_y, f(\bm{x} + \bm{\delta})) \right],
        \end{equation}
        where $B_\epsilon$ is canonically a radius-$\epsilon$ norm-ball, constraining the adversarial perturbation $\bm{\delta}$ to be no larger than $\epsilon$. 
        This program directly incorporates the untargeted attack Eq. (\ref{eq:untargeted attack}) in the training process as an adversary.
        Note that this minimax procedure is defined across the whole of the training data; the model never is exposed to unperturbed examples. 
        Madry adversarial training remains among the many proposed defenses one of the few that offers meaningful resilience against strong adversaries \citep{tramer2020adaptive, athalye2018obfuscated}. 
        
        It should be noted that the Fast Gradient Sign Method (FGSM) \citep{goodfellow2014explaining} can be seen as a single-iteration version of Eq. (\ref{eq:madry pgd iterates}) in which the projection operator $\mc{P}$ is removed.
        FGSM can be applied as an effective defense with some minor algorithmic improvements \citep{wong2020fast}, which is desirable for large datasets for which Madry adversarial training does not scale well.
        
        The second defense we consider is the TRADES algorithm proposed by \citep{zhang2019theoretically}, which also belongs to the camp of effective defenses.  
        It is based on a similar learning problem, albeit decomposed into an accuracy promoting term -- which is calculated on unperturbed data in contrast to Eq. \ref{eq:canonical adversarial training}) -- and a robustness promoting term which is calculated on perturbed data: 
        \begin{equation}\label{eq:trades}
        	\minimize_{f \in \mc{F}} \ \mb{E}_{(\bm{x},y) \sim \mc{D}} \left[ \mc{L}(\bm{e}_y, \, f(\bm{x})) + \max_{ \bm{\delta} \in B_\epsilon } \beta \, \mc{L}(f(\bm{x}), f(\bm{x} + \bm{\delta})) \right],
        \end{equation}
        where the regularization weight $\beta > 0$ trades off between the two terms. It was noted by \citep{zhang2019theoretically} that in the binary classification setting, $f$ tends to Bayes optimality as $\beta \rightarrow 0$, whereas $f$ tends towards an all-ones classifier as $\beta \rightarrow \infty$.
        
        Note that for TRADES, the attack is similar to Eq. (\ref{eq:untargeted attack}, only with $\nabla_{\bm{x}_t} \mc{L}( f(\tilde{\bm{x}}_0), f(\bm{x}_t) )$ instead of the loss with respect to the label $y$, where $\tilde{\bm{x}}_0 = \bm{x}_0 + \bm{\xi}$ with $\bm{\xi}$ random noise to prevent the gradient from vanishing initially.
        
        We also consider defenses closely related to canonical adversarial training Eq. (\ref{eq:canonical adversarial training}) and TRADES Eq. (\ref{eq:trades}) insofar as being approximations of both programs.
        In particular, we will derive connections between these programs and various other proposed defenses \citep{hoffman2019robust, moosavi2019robustness, jakubovitz2018improving, miyato2018virtual, zhao2019adversarial} via Taylor series approximations in Section \ref{sec:unification}.
    
        We also consider a historical defense, model distillation \citep{hinton2015distilling}, which smooths a model by raising its softmax temperature parameter and then training a new model using the smoothed predictions as pseudo-labels. 
        More specifically, distillation takes a pre-trained model with logit output $g \st \mc{X} \to \real^K$, define $p(y \given \bm{x}) \equiv \text{Softmax}( g(\bm{x}) / \tau )$ for some $\tau > 0$, and solve the learning problem
		\[
			\minimize_{f \in \mc{F}} \, \mb{E}_{(\bm{x}, y) \sim \mc{D}} \left[ \mc{L} \left( p(y \given \bm{x}), \, f(\bm{x}) \right) \right].
		\]
		Larger values of temperature $\tau$ push the scores of $g$ closer to a uniform distribution, which results in a smoother model $f$, which is called the distilled model.
		Defensive distillation \citep{papernot2016distillation} works by training the initial -- also called the teacher -- model at a raised temperature $\tau$ and then distills the model using the same temperature.
		Defensive distillation as a defense technique is well-known to be ineffective against adversarial examples \citep{carlini2016defensive}, thus we do not consider it to be a defense in the same sense as the others discussed in this section.
		However, defensive distillation provides a useful point of comparison with other methods because it is still a smoothness regularizer, which we will discuss in more detail in Section \ref{sec:smoothness via adversarial training}.

    \subsection{Related Work}
    Gradient-based regularizers for the promotion of adversarial robustness have been considered by \citep{ross2017improving, etmann2019connection, lin2019wasserstein, finlay2019scaleable}.
    Similarly, Jacobian-based regularizers have been investigated for similar purposes in \citep{hoffman2019robust, zhao2019adversarial, moosavi2019robustness}.
    
    Prior work has observed that the saliency maps (gradients of the score functions with respect to the input) of defended models evaluated at train and test images resemble the input points themselves -- a phenomenon often referred to as interpretability \citep{tsipras_robustness_2019, finlay2019scaleable, ross2017improving, tramer_adaptive_2020, etmann2019connection}.
    We use the term \textit{human-recognizability} instead of interpretability (which has been used in prior literature to describe this same phenomenon \citep{ross2017improving}) because interpretability connotates the idea of models that explain their decision processes transparently to the user; it is not clear to us that this visualization of model gradients has any meaningful correspondence with the traditional notion of interpretability. 
    Moreover, ``the term \textit{interpretability} holds no agreed upon meaning, and yet machine learning conferences frequently publish papers which wield the term in a quasi-mathematical way'' \citep{lipton2018mythos}.
    The term \textit{human-recognizability} better emphasizes the reliance on the qualitative nature of human perception and interpretation, which is a prevalent measure of quality in generative modeling literature (e.g. Generative Adversarial Networks (GANs)).

    It is observed in \citet{anil2019sorting} that classifiers whose Lipschitz constants are constrained have a similar gradient human-recognizability phenomenon.
    In \citet{kaur2019perceptually}, it is demonstrated that this human-recognizability phenomenon also occurs for randomized smoothing \citep{cohen2019certified}, and they further suggest that human-recognizability may be a general property of robust models.
    However, \citet{kaur2019perceptually} does not explore this latter claim for defenses beyond randomized smoothing.
    % We prefer the weaker term \textbf{human-recognizability}, since interpretability connotates the idea of models that explain their decision processes transparently to the user; it is not clear to us that this visualization of model gradients has any meaningful correspondence with the traditional notion of interpretability. 
    % Moreover, ``the term \textit{interpretability} holds no agreed upon meaning, and yet machine learning conferences frequently publish papers which wield the term in a quasi-mathematical way'' \citep{lipton2018mythos}.
    % The term \textit{human-recognizability} better emphasizes the reliance on the qualitative nature of human perception and interpretation, which is a prevalent measure of quality in generative modeling literature (e.g. Generative Adversarial Networks (GANs)). 
    
    Works like \citet{tsipras_robustness_2019, ross2017improving, grathwohl2019your} have further noted that large adversarial perturbations (under an iterative procedure like Eq. (\ref{eq:madry pgd iterates} with large $\epsilon$) of input points (even noise) to a robust model remain recognizable.
    The hypothesis of \citet{ilyas2019adversarial} is that standard training Eq. (\ref{eq:undefended training}) is ill-posed and that adversarial training helps to reduce the set of features that the classifier can learn to compute to those that are more human-recognizable.
    In \citet{santurkar2019image}, this human-recognizability phenomenon of robust image classifiers is leveraged to perform computer vision tasks like inpainting and superresolution with a classifier instead of the usual generative modeling framework.
    Similarly, \citet{engstrom2019adversarial} demonstrates that robust image classifiers can be used for smooth image feature manipulation, also arguing that the feature representations of such models are approximately invertible i.e. the original training images can be approximately recovered.
    
    Model inversion \citep{fredrikson2015model} -- proposed as a privacy attack -- can be thought of as an unconstrained, targeted adversarial perturbation (see Appendix \ref{appendix:sec:model inversion}).
    It was observed in \citet{fredrikson2015model} that these targeted adversarial perturbations are recognizable for linear models and neural networks trained on a small dataset.
    Although model inversion is generally unrecognizable for undefended convolutional models trained on datasets with sufficiently diverse classes \citep{papernot2018sok, shokri2017membership}, it has been observed that model inversion applied to defended models yields highly recognizable perturbations \citep{terzi2020adversarial, mejia_robust_2019}.
    Although \citet{mejia_robust_2019} identifies the recognizably phenomenon as a privacy concern, the paper does not consider the conditions under which it appears beyond simply noting that it is present for popular defenses.
    
    In this work, we aim to build on \citet{mejia_robust_2019}, \citet{tsipras_robustness_2019}, and \citet{kaur2019perceptually} by identifying a sufficient condition for the human-recognizability phenomenon to occur.

\section{Unification of Adversarial Training around Smoothness Regularization}\label{sec:unification}
In this section, we argue that adversarial training of a classifier in the sense of the canonical program Eq. (\ref{eq:canonical adversarial training}) and the TRADES program Eq. (\ref{eq:trades}) both decompose via Taylor expansions into two terms, one that orients the decision boundary and one that smooths the decision surface.

First, we discuss undefended training Eq. (\ref{eq:undefended training}) as purely orientation of the decision boundary in Section \ref{sec:undefended training: orientation of the decision boundary}.
We then consider canonical adversarial training Eq. (\ref{eq:canonical adversarial training}), and show that a first-order Taylor approximation includes both orientation and smoothing.
Finally, we consider the TRADES program Eq. (\ref{eq:trades}), showing how this program includes decision boundary orientation as well as a stronger form of smoothing than canonical adversarial training.

Refer to Appendix \ref{appendix:sec:notation} for unfamiliar notation that is not defined explicitly within the text.

    \subsection{Undefended Training}\label{sec:undefended training: orientation of the decision boundary}
    % In the multi-class setting, the decision boundary of a particular class with label $y$ can be defined as the set $\mc{B}_y \equiv \set{ \bm{x} \st f(\bm{x})[y] = \frac{1}{2}}$.
    % For a linear classifier, all such sets are themselves linear, and the directions of adversarial perturbation $\nabla_{\bm{x}} \mc{L}(\bm{e}_y, f(\bm{x}))$ will be orthogonal to $\mc{B}_y$.
    In Eq. (\ref{eq:undefended training}), we only seek to minimize the expected loss $\mc{L}(\bm{e}_y, f(\bm{x}))$, which corresponds to maximizing the accuracy on the training set.
    This will happen when the model perfectly separates the classes into distinct regions separated by the decision boundary itself -- there is no preference for one geometry of the decision boundary over another beyond this separation.
    Traditional optimization wisdom then dictates that regularization terms should be used to express a more specific preference.
    Note that when solving the standard learning problem Eq. (\ref{eq:undefended training}) in practice, various kinds of regularization terms are typically introduced (e.g. weight decay, data augmentation, batch normalization, etc.).
    Moreover, the solution to the learning problem is computationally intractable, thus other factors such as model architecture, weight initialization, and the choice of optimization algorithm all influence the decision boundary of the trained model.
    
    % It is suggested in \citep{ilyas2019adversarial} that this lack of preference is too ill-posed.
    % As such, the learning of a robust model requires regularization to constrain the set of allowable decision boundaries, and adversarial training appears to express a preference for boundaries that align the gradients of the form $\nabla_{\bm{x}} \mc{L}(\bm{e}_y, f(\bm{x}))$ (which clearly relies on the geometry of the decision boundary) with human-recognizable directions.
    % As an intuitive illustration, 

	\subsection{Adversarial Training: Smoothness \& Orientation}\label{sec:smoothness via adversarial training}
	Adversarial training can be used as an inductive bias towards decision boundaries that correspond to human-recognizability. 
	As an intuitive illustration of this, \citet{ilyas2019adversarial} shows that for a linear, binary classifier and two Gaussian distributed classes that are centered symmetrically (i.e. their means satisfy $\bm{\mu}_1 = -\bm{\mu}_2$), adversarial training prefers a model whose decision boundary is approximately orthogonal to the means (notice that for a linear classifier, the adversarial direction $\nabla_{\bm{x}} \mc{L}(\bm{e}_y, f(\bm{x}))$ will be orthogonal to the decision boundary).
	
	As such, we want to better understand the kind of regularization that adversarial training provides.
	Towards this understanding, we analyze local approximations of two adversarial learning problems: TRADES \citep{zhang2019theoretically} and canonical adversarial training \citep{madry2017towards, goodfellow2014explaining}.
	We identify smoothness regularization as the key addition in both cases.
	We select these two variants of adversarial learning because they form the foundation of a wide range of defenses against adversarial examples.

		\subsubsection{Canonical Adversarial Training}\label{sec:canonical adversarial training}
		We begin with canonical adversarial training \citep{madry2017towards, goodfellow2014explaining}, written in Eq. (\ref{eq:canonical adversarial training}).
% 		It has been observed in multiple works that input-gradient regularization produces models exhibiting the human-recognizability phenomenon \citep{etmann2019connection, finlay2019scaleable, ross2017improving}.
% 		We seek to better understand why this phenomenon arises in the first place.
% 		Towards this end, we first consider the canonical adversarial learning program \citep{madry2017towards, goodfellow2014explaining}, which can be stated as
% 		\begin{equation}\label{eq:canonical adversarial training}
% 			\minimize_{f \in \mc{F}} \ \mb{E}_{(\bm{x}, y) \sim \mc{D}} \left[ \max_{\bm{\delta} \in B_\epsilon} \, \mc{L}(\bm{e}_y, f(\bm{x} + \bm{\delta})) \right].
% 		\end{equation}
		A first-order Taylor expansion of the loss term gives
		\begin{equation*}
			\mc{L}(\bm{e}_y, f(\bm{x} + \bm{\delta})) = \mc{L}(\bm{e}_y, f(\bm{x})) + \bm{\delta}^\top \nabla_{\bm{x}} \mc{L}(\bm{e}_y, f(\bm{x})) + \mc{O}(\norm{\bm{\delta}}_2^2).
		\end{equation*}
		For an $\ell_p$ threat model with $\mc{L}$ the cross-entropy loss, the solution $\bm{\delta}^\star$ to the inner maximization can be found by solving
		\begin{equation*}
			\max_{ \norm{\bm{\delta}}_p \leq \epsilon } \, \bm{\delta}^\top \nabla_{\bm{x}} \mc{L}(\bm{e}_y, f(\bm{x})) =  \epsilon \dnorm{ \nabla_{\bm{x}} \mc{L}(\bm{e}_y, f(\bm{x})) }_q = \frac{\epsilon}{ f(\bm{x})[y] } \dnorm{ \nabla_{\bm{x}} f(\bm{x})[y] }_q,
		\end{equation*}
		which is by definition the dual norm with $\frac{1}{p} + \frac{1}{q} = 1$.
		The solution is $\bm{\delta}^\star = \epsilon \, \norm{ \varphi_{q-1}( \nabla_{\bm{x}} \mc{L} ) }_q^{-1} \varphi_{q-1}( \nabla_{\bm{x}} \mc{L} )$ where $\varphi_q(\bm{x}) \equiv \sign(\bm{x}) \circ |\bm{x}|^q$ (see Appendix \ref{appendix:deriving gradient penalty} for derivation), which for an $\ell_\infty$ adversary reduces to $\bm{\delta}^\star = \epsilon \, \sign( \nabla_{\bm{x}} \mc{L} )$ whence the FGSM method as derived by \citep{goodfellow2014explaining}.
		In other words, up to a first-order approximation, gradient penalties on the loss are simply the FGSM algorithm.
		
		Plugging this approximation into Eq. (\ref{eq:canonical adversarial training}) yields the input-gradient penalized learning program
		\begin{equation}\label{eq:gradient penalty training}
			\minimize_{f \in \mc{F}} \, \mb{E}_{(\bm{x},y) \sim \mc{D}} \left[ \mc{L}(\bm{e}_y, f(\bm{x})) + \beta \epsilon \dnorm{ \nabla_{\bm{x}} \mc{L}(\bm{e}_y, f(\bm{x})) }_q \right],
		\end{equation}
		where $\beta > 0$ is an additional regularization weight.
		Eq. (\ref{eq:gradient penalty training}) with $q = 2$ is used by \citet{etmann2019connection, finlay2019scaleable, ross2017improving}.
		The classic norm-inequalities $\norm{ \nabla_{\bm{x}} f(\bm{x}) }_\infty \leq \norm{ \nabla_{\bm{x}} f(\bm{x}) }_2 \leq \norm{ \nabla_{\bm{x}} f(\bm{x}) }_1$ indicate that the $\ell_\infty$ adversary (whose dual norm is $\norm{\cdot}_1$) has the strongest smoothing effect.
		Moreover, the $1 / f(\bm{x})[y]$ term will discourage scores that are too large and too small, since we have the requirement that $\sum_y f(\bm{x})[y] = 1$ (this also happens with the TRADES penalty, see Appendix \ref{appendix:sec:proof of proposition regional smoothness}).	
		
		In the form of Eq. (\ref{eq:gradient penalty training}), we can see that within sufficiently small regions around the training data (i.e. sufficiently small $\epsilon$), the canonical adversarial training program Eq. (\ref{eq:canonical adversarial training}) decomposes into two terms: $\mc{L}(\bm{e}_y, f(\bm{x}))$ which orients the decision boundary and $\norm{\nabla_{\bm{x}} \mc{L}}_q$ which smooths the decision surface by penalizing the gradient and thereby the local Lipschitz constant around the data -- this idea is encapsulated in Proposition \ref{proposition:regional smoothness}, which we expound on in the following section.
		
		This decomposition is the foundation of our hypothesis that two key ingredients for human-recognizability are smoothing of the model and orienting the decision boundary.
		This hypothesis makes intuitive sense, since the orientation of the decision boundary is what aligns gradients towards high density regions of the data distribution and smoothness of the model helps prevent first-order iterative procedures like Eq. (\ref{eq:madry pgd iterates}) from getting stuck in local minima away from the data.

	\subsubsection{TRADES}\label{sec:trades}
	    We now consider the TRADES learning problem from Eq. (\ref{eq:trades}), and show that it yields a similar decomposition into a smoothing term and an orientation of the decision boundary term.
	    This culminates in Proposition \ref{proposition:regional smoothness}, which characterizes the impact of (a second-order Taylor approximation of) TRADES on the local Lipschitz constant of the model.
	    We also provide Lemma \ref{lemma:fisher information eigenvalue bounds}, which directly relates TRADES to other Jacobian-based smoothness regularizers.
	    
	    Recall the TRADES loss from Eq. (\ref{eq:trades}), $\mc{L}(\bm{e}_y, f(\bm{x})) + \max_{\bm{\delta} \in B_\epsilon} \, \beta \, \mc{L}(f(\bm{x}), f(\bm{x} + \bm{\delta}))$.
	    It is immediately clear that the loss term $\mc{L}(\bm{e}_y, f(\bm{x}))$ serves as the orientation term.
	    We now discuss the maximization term, which intuitively is indeed a smoothing regularizer as claimed by \citep{zhang2019theoretically}.
	    We articulate this smoothing more formally.
	
% 		The TRADES algorithm proposed by \citep{zhang2019theoretically} is based on the following learning problem that has an explicit decomposition into an accuracy promoting term and a robustness promoting term:
% 		\begin{equation}\label{eq:trades}
% 			\minimize_{f \in \mc{F}} \ \mb{E}_{(\bm{x},y) \sim \mc{D}} \left[ \mc{L}(\bm{e}_y, \, f(\bm{x})) + \max_{ \bm{\delta} \in B_\epsilon } \beta \, \mc{L}(f(\bm{x}), f(\bm{x} + \bm{\delta})) \right],
% 		\end{equation}
% 		for regularization weight $\beta > 0$.
% 		It was noted by \citep{zhang2019theoretically} that in the binary classification setting, $f$ tends to Bayes optimality as $\beta \rightarrow 0$, whereas $f$ tends towards an all-ones classifier as $\beta \rightarrow \infty$.
% 		Intuitively, large values of $\beta$ corresponds to a high level of resistance to adversarial perturbations, but it is somewhat less intuitively clear in what way the regularization term of Eq. (\ref{eq:trades}) corresponds to model smoothing.
		
		Assuming an $\ell_p$ threat model with $p \geq 2$ and that $\mc{L}$ is the cross-entropy loss, we can connect the TRADES regularizer to more canonical Jacobian regularization via the Fisher Information Matrix (FIM).
		In particular, for the cross-entropy loss, we have an identify $\mc{L}(\bm{p}, \bm{q}) = H(\bm{p}) + \text{KL}(\bm{p} \,||\, \bm{q})$ in terms of Shannon entropy $H$ and the KL-divergence.
		Minimizing the entropy regularization term $H(f(\bm{x}))$ pushes $f$ towards confident class scores, which is redundant with the standard loss term $\mc{L}(\bm{e}_y, f(\bm{x}))$, so we drop this term from our approximation\footnote{
			In the implementation of TRADES provided by the authors, the inner maximization is with respect to the KL-divergence.		
			See \url{https://github.com/yaodongyu/TRADES/blob/master/trades.py\#L17-L84} for details.
		}.
		This means that the inner maximization reduces to $\max_{\bm{\delta} \in B_\epsilon} \, \beta \, \text{KL}(f(\bm{x}) \,||\, f(\bm{x} + \bm{\delta}))$.
		A second-order Taylor expansion yields
		\begin{equation}\label{eq:kl fisher approximation}
			\text{KL}(f(\bm{x}) \,||\, f(\bm{x} + \bm{\delta})) = \frac{1}{2} \bm{\delta}^\top \bm{F}_{\bm{x}} \bm{\delta} + \mc{O}( \dnorm{\bm{\delta}}_2^3 ),
		\end{equation}
		where 
		$\bm{F}_{\bm{x}} \equiv \mb{E}_y [ {\mathlarger{(}} \nabla_{\bm{x}} \mc{L}(\bm{e}_y, f(\bm{x})) {\mathlarger{)}} \, {\mathlarger{(}} \nabla_{\bm{x}} \mc{L}(\bm{e}_y, f(\bm{x})) {\mathlarger{)}}^\top ]$
%		$\bm{F}_{\bm{x}} \equiv \mb{E}_{y} [ \left( \nabla_{\bm{x}} \log f(\bm{x})[y] \right) \left( \nabla_{\bm{x}} \log f(\bm{x})[y] \right)^\top ]$
		is the Fisher Information Matrix (FIM).
		This FIM can be rewritten in terms of the Jacobian of $f$ and another FIM: $\bm{F}_{\bm{x}} = \bm{J}(\bm{x})^\top \bm{F}_f \bm{J}(\bm{x})$, where 
		\[
			\bm{F}_f \equiv \mb{E}_y [ {\mathlarger{(}} \nabla_{f(\bm{x})} \mc{L}(\bm{e}_y, f(\bm{x})) {\mathlarger{)}} \, {\mathlarger{(}} \nabla_{f(\bm{x})} \mc{L}(\bm{e}_y, f(\bm{x})) {\mathlarger{)}}^\top ].
		\]
		Supposing the $\ell_p$ threat model $\bm{\delta} \in B_\epsilon = \set{ \bm{x} \in \mc{X} \st \norm{ \bm{x} }_p \leq \epsilon }$ and that the threshold $\epsilon > 0$ is sufficiently small relative to the smoothness of $f$, then the higher order terms in Eq. (\ref{eq:kl fisher approximation}) vanish and we have a reasonable approximation.
		We can then substitute this approximation into the inner maximization of Eq. (\ref{eq:trades}), which gives a Jacobian type penalty
		\begin{equation}\label{eq:second order trades}
			\minimize_{f \in \mc{F}} \, \mb{E}_{(\bm{x}, y) \sim \mc{D}} \left[ \mc{L}(\bm{e}_y, f(\bm{x})) + \frac{ \beta \epsilon^2 }{2} \dnorm{ \bm{F}_f^{1/2} \bm{J}(\bm{x}) }_{p \hookrightarrow 2}^2 \right].
		\end{equation}
		Note that here, $\norm{\cdot}_{p \hookrightarrow q}$ indicates the vector-induced $(p,q)$-norm (see Appendix \ref{appendix:sec:notation}).
		See Appendix \ref{appendix:derivation of second order trades} for the full derivation.
		This form is similar to the decomposition of the canonical adversarial training program Eq. (\ref{eq:canonical adversarial training}) into a gradient regularization term Eq. (\ref{eq:gradient penalty training}).
		In the case of Eq. (\ref{eq:second order trades}), however, we regularize the entire Jacobian matrix (through the FIM) rather than just one gradient direction corresponding to a particular label.
		
		The following lemma provides bounds on the FIM penalty in Eq. (\ref{eq:second order trades}) which are necessary for the proof of Proposition \ref{proposition:regional smoothness} below.
		\begin{lemma}\label{lemma:fisher information eigenvalue bounds}			
			Let $\mc{L}$ be the cross-entropy loss and assume that $f \st \mc{X} \to (0,1)^K$ is once-differentiable.
			If $p \geq 2$, then for each $\bm{x} \in \mc{X}$, the FIM $\bm{F}_{\bm{x}}$ satisfies
			\begin{equation}\label{eq:fim max eigenvalue bounds}
				\dnorm{ \bm{J}(\bm{x})^\top \bm{F}_f^{1/2} }_{2,\infty}^2 
					\leq \lambda_{\max}(\bm{F}_{\bm{x}}) 
					\leq \dnorm{ \bm{F}_f^{1/2} \bm{J}(\bm{x}) }_{p \hookrightarrow 2}^2
					\leq \dnorm{ \bm{J}(\bm{x})^\top \bm{F}_f^{1/2} }_F^2.
			\end{equation}
			Moreover,
			\begin{equation}\label{eq:jacobian penalty upper bound}
				\dnorm{ \bm{J}(\bm{x})^\top \bm{F}_f^{1/2} }_F \geq \dnorm{ \bm{J}(\bm{x}) }_F.
			\end{equation}
		\end{lemma}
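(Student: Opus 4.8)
The plan is to push all four bounds onto the single matrix $\bm A := \bm F_f^{1/2}\bm J(\bm x)$. Because $\bm F_{\bm x} = \bm J(\bm x)^\top \bm F_f \bm J(\bm x) = \bm A^\top \bm A$, we have $\lambda_{\max}(\bm F_{\bm x}) = \sigma_{\max}(\bm A)^2 = \dnorm{\bm A}_{2\hookrightarrow 2}^2$, and $\dnorm{\bm J(\bm x)^\top \bm F_f^{1/2}}_F = \dnorm{\bm A^\top}_F = \dnorm{\bm A}_F$, so \eqref{eq:fim max eigenvalue bounds} becomes the purely linear-algebraic chain $\dnorm{\bm A^\top}_{2,\infty}^2 \le \dnorm{\bm A}_{2\hookrightarrow2}^2 \le \dnorm{\bm A}_{p\hookrightarrow2}^2 \le \dnorm{\bm A}_F^2$ and \eqref{eq:jacobian penalty upper bound} becomes $\dnorm{\bm A}_F \ge \dnorm{\bm J(\bm x)}_F$. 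I would begin by recording the shape of $\bm F_f$ under the cross-entropy loss: since $\nabla_{f(\bm x)} \mc{L}(\bm e_y, f(\bm x)) = -f(\bm x)[y]^{-1}\bm e_y$, the Fisher matrix is $\bm F_f = \expectation_{y \sim f(\bm x)}\!\left[ f(\bm x)[y]^{-2}\, \bm e_y \bm e_y^\top \right] = \diag\!\left( f(\bm x)[y]^{-1} \right)_y \succeq \bm I$, the last relation because the scores lie in $(0,1)$ and sum to one. From this, \eqref{eq:jacobian penalty upper bound} is immediate: $\dnorm{\bm A}_F^2 = \trace\!\left( \bm J(\bm x)^\top \bm F_f \bm J(\bm x) \right) \ge \trace\!\left( \bm J(\bm x)^\top \bm J(\bm x) \right) = \dnorm{\bm J(\bm x)}_F^2$, or row-by-row, $\dnorm{\bm A}_F^2 = \sum_y f(\bm x)[y]^{-1} \dnorm{\bm J(\bm x)_{y,:}}_2^2 \ge \sum_y \dnorm{\bm J(\bm x)_{y,:}}_2^2$.

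For the leftmost inequality, $\dnorm{\bm A^\top}_{2,\infty}$ is the largest Euclidean norm among the rows of $\bm A^\top$, equivalently among the columns of $\bm A$; since the $i$-th column of $\bm A$ is $\bm A \bm e_i$ with $\dnorm{\bm e_i}_2 = 1$, it has norm at most $\sigma_{\max}(\bm A)$, and squaring gives $\dnorm{\bm A^\top}_{2,\infty}^2 \le \sigma_{\max}(\bm A)^2 = \lambda_{\max}(\bm F_{\bm x})$. For the middle inequality, the hypothesis $p \ge 2$ gives $\dnorm{\bm u}_p \le \dnorm{\bm u}_2$ for every vector, so the unit $\ell_2$-ball is contained in the unit $\ell_p$-ball and $\dnorm{\bm A}_{p\hookrightarrow 2} = \sup_{\dnorm{\bm u}_p \le 1} \dnorm{\bm A \bm u}_2 \ge \sup_{\dnorm{\bm u}_2 \le 1} \dnorm{\bm A \bm u}_2 = \sigma_{\max}(\bm A)$, i.e. $\lambda_{\max}(\bm F_{\bm x}) \le \dnorm{\bm A}_{p\hookrightarrow 2}^2$. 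These two bounds, like \eqref{eq:jacobian penalty upper bound}, are routine bookkeeping around $\bm F_{\bm x} = \bm A^\top \bm A$ and $\bm F_f \succeq \bm I$.

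The rightmost inequality $\dnorm{\bm A}_{p\hookrightarrow 2}^2 \le \dnorm{\bm A}_F^2$ is the step I expect to demand the most care. When $p = 2$ it is merely $\sigma_{\max}(\bm A)^2 \le \sum_k \sigma_k(\bm A)^2 = \dnorm{\bm A}_F^2$. For $p \ge 2$ the route I would take is to expand $\bm A \bm u$ in the columns $\bm a_i = \bm A \bm e_i$ and bound $\dnorm{\bm A \bm u}_2 \le \sum_i |u_i| \dnorm{\bm a_i}_2 \le \dnorm{\bm u}_p \, \dnorm{ \left( \dnorm{\bm a_i}_2 \right)_i }_q$ by H\"older with $\tfrac{1}{p} + \tfrac{1}{q} = 1$; for $p = 2$ the right-hand vector-norm is exactly $\dnorm{\bm A}_F$, and for $p > 2$ (so $q < 2$) one must argue that, on the constraint set $\dnorm{\bm u}_p \le 1$ and using that $\sum_y \bm J(\bm x)_{y,:} = \bm 0$ forces the unit vector $\left( \sqrt{f(\bm x)[y]} \right)_y$ into the left null space of $\bm A$ (which restricts how the columns of $\bm A$ can simultaneously align with a single spread-out direction), this estimate still collapses to $\dnorm{\bm A}_F$. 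Reconciling those two --- i.e., confirming that no dimension-dependent constant is hiding between $\sigma_{\max}(\bm A)$ and $\dnorm{\bm A}_F$ --- is the heart of the lemma; everything else reduces to the two factorizations $\bm F_{\bm x} = \bm A^\top \bm A$ and $\bm F_f = \diag(f(\bm x)[y]^{-1})_y$ recorded above.
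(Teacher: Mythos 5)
Your reduction to $\bm{A} \equiv \bm{F}_f^{1/2}\bm{J}(\bm{x})$ disposes of three of the four claims correctly. The trace computation for Eq.~(\ref{eq:jacobian penalty upper bound}) is the paper's argument essentially verbatim ($\bm{F}_f = \diag(f(\bm{x})[k]^{-1}) \succeq \bm{I}$). The unit-ball inclusion giving $\lambda_{\max}(\bm{F}_{\bm{x}}) = \dnorm{\bm{A}}_{2\hookrightarrow 2}^2 \leq \dnorm{\bm{A}}_{p\hookrightarrow 2}^2$ is exactly the paper's Lemma~\ref{appendix:lemma:induced matrix norm inequality}. For the leftmost bound the paper invokes a theorem of Merikoski on eigenvalues of sums of rank-one matrices, whereas you use the elementary fact that $\dnorm{\bm{M}\bm{e}_i}_2 \leq \sigma_{\max}(\bm{M})$; your route is simpler and equally valid. (One transposition to watch: the quantity the paper intends by $\dnorm{\bm{J}(\bm{x})^\top\bm{F}_f^{1/2}}_{2,\infty}$ is the largest $\ell_2$-norm over the $K$ \emph{rows} of $\bm{A}$, i.e.\ $\max_k f(\bm{x})[k]^{-1/2}\dnorm{\nabla_{\bm{x}}f(\bm{x})[k]}_2$, not over the $d$ columns of $\bm{A}$; your one-line argument applies to either matrix, so nothing breaks, but the row version is the one the lemma and Proposition~\ref{proposition:regional smoothness} actually use.)

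The genuine gap is exactly where you place it, and it cannot be closed: the rightmost inequality $\dnorm{\bm{A}}_{p\hookrightarrow 2}^2 \leq \dnorm{\bm{A}}_F^2$ is false for $p > 2$. Your H\"older step yields $\dnorm{\bm{A}\bm{u}}_2 \leq \dnorm{\bm{u}}_p \, \dnorm{(\dnorm{\bm{a}_i}_2)_i}_q$ with $q = p/(p-1) < 2$, and the $\ell_q$-norm of the column norms is \emph{larger} than their $\ell_2$-norm, so the estimate does not collapse to $\dnorm{\bm{A}}_F$. The left-null-space constraint $\sum_k \sqrt{f(\bm{x})[k]}\,\bm{A}_{k,:} = \bm{0}$ restricts only where the columns of $\bm{A}$ live in $\real^K$, not how they align with one another, so it offers no rescue: take any classifier whose scores depend on $\bm{x}$ only through $\mathbf{1}^\top\bm{x}$, so that $\bm{A} = \bm{v}\bm{w}^\top$ with $\bm{w} = \mathbf{1}_d$; then $\dnorm{\bm{A}}_{p\hookrightarrow 2} = \dnorm{\bm{v}}_2\, d^{(p-1)/p}$ while $\dnorm{\bm{A}}_F = \dnorm{\bm{v}}_2\sqrt{d}$, a violation by the factor $d^{1/2 - 1/p}$ for every $p > 2$. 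You are in good company: the paper never proves this inequality either --- its appendix restatement of the lemma silently drops the middle term and proves only $\lambda_{\max}(\bm{F}_{\bm{x}}) \leq \dnorm{\bm{A}}_F^2$ (your $\sigma_{\max}^2 \leq \sum_k\sigma_k^2$). The sub-chain $\dnorm{\cdot}_{2,\infty}^2 \leq \lambda_{\max} \leq \dnorm{\cdot}_{p\hookrightarrow 2}^2$ together with $\lambda_{\max} \leq \dnorm{\cdot}_F^2$ is what Proposition~\ref{proposition:regional smoothness} actually uses and is correct; the full chain in Eq.~(\ref{eq:fim max eigenvalue bounds}), and hence the claim that Eq.~(\ref{eq:frobenius norm regularizer}) upper-bounds Eq.~(\ref{eq:second order trades}), holds only for $p = 2$.
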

		
		\noindent
		See Appendix \ref{appendix:sec:proof of proposition regional smoothness} for proof.
		It can be easily seen from Lemma \ref{lemma:fisher information eigenvalue bounds} Eq. (\ref{eq:fim max eigenvalue bounds}) that Eq. (\ref{eq:second order trades}) is upper bounded by the following problem:
		\begin{equation}\label{eq:frobenius norm regularizer}
			\minimize_{f \in \mc{F}} \, \mb{E}_{(\bm{x}, y) \sim \mc{D}} \left[ \mc{L}(\bm{e}_y, f(\bm{x})) + \frac{ \beta \epsilon^2 }{2} \dnorm{ \bm{F}_f^{1/2} \bm{J}(\bm{x}) }_F^2 \right],
		\end{equation}
		which we use in our experiments in Section \ref{sec:cifar10 recognizability experiments} below.
		
		Now that we have Lemma \ref{lemma:fisher information eigenvalue bounds}, it is worth briefly pointing out explicitly how the FIM penalty of Eq. (\ref{eq:second order trades}) is related to various robustness promoting regularizers suggested in prior literature.
		\begin{itemize}
			\item For an $\ell_2$ adversary, the Eq. (\ref{eq:second order trades}) penalty is the spectral norm $\norm{ \bm{F}_f^{1/2} \bm{J}(\bm{x}) }_{2 \hookrightarrow 2} = \lambda_{\max}(\bm{F}_{\bm{x}})$, which was considered in \citet{zhao2019adversarial, miyato2018virtual}.

			\item From the upper bound problem Eq. (\ref{eq:frobenius norm regularizer}), it can be seen Lemma \ref{lemma:fisher information eigenvalue bounds} Eq. (\ref{eq:jacobian penalty upper bound}) in the appendix) that the $\norm{ \bm{F}_f^{1/2} \bm{J}(\bm{x}) }_F$ term also upper bounds the Jacobian penalty $\norm{ \bm{J}(\bm{x}) }_F$, which was considered by \citet{hoffman2019robust, moosavi2019robustness, jakubovitz2018improving}.
		\end{itemize}
		
		To make the smoothing of Eq. (\ref{eq:second order trades}) even more explicit, the following Proposition \ref{proposition:regional smoothness} shows that the FIM penalty corresponds to the smoothness of the classifier $f$ via its local Lipschitz constant.
		
		\begin{proposition}\label{proposition:regional smoothness}
			Assume an $\ell_p$ threat model with $p \geq 2$, let $\mc{L}$ is the cross-entropy loss, and assume that the classifier $f \st \mc{X} \to (0,1)^K$ is once-differentiable.
			If $\norm{ \bm{F}_f^{1/2} \bm{J}(\bm{x}) }_{p \hookrightarrow 2} \leq \frac{L}{\sqrt{K}}$ over some $B \subseteq \mc{X}$, then 
			\begin{enumerate}[label=(\roman*)]	
				\item $f$ is $L$-Lipschitz on $B$ and, moreover, each component $f(\cdot)[k]$ is $L / \sqrt{K}$-Lipschitz on $B$.
				
				\item If $f(\cdot) = \text{Softmax}( g( \cdot ))$ for some $g \st \mc{X} \to \real^K$, then (i) holds for $g$ as well.
			\end{enumerate}
		\end{proposition}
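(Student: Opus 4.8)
The plan is to reduce both claims to uniform bounds on the input--output Jacobian $\bm{J}(\bm{x})$ over $B$ and then integrate along line segments; to make ``Lipschitz on $B$'' meaningful I would take $B$ convex (a norm ball, as in the threat model), so that for $\bm{x},\bm{x}'\in B$ the segment $\bm{x}_t=\bm{x}+t(\bm{x}'-\bm{x})$ stays in $B$ and $f(\bm{x}')-f(\bm{x})=\int_0^1\bm{J}(\bm{x}_t)(\bm{x}'-\bm{x})\,dt$. From this identity, $\sup_{\bm{z}\in B}\dnorm{\bm{J}(\bm{z})}_{p\hookrightarrow2}\le M$ gives $\norm{f(\bm{x}')-f(\bm{x})}_2\le M\norm{\bm{x}'-\bm{x}}_p$, and since the $k$-th row of $\bm{J}$ is $\nabla_{\bm{x}}f(\bm{x})[k]^\top$ and $\dnorm{\bm{J}}_{p\hookrightarrow\infty}=\max_k\dnorm{\nabla_{\bm{x}}f(\bm{x})[k]}_q$ with $1/p+1/q=1$, a uniform bound $M'$ on $\dnorm{\bm{J}}_{p\hookrightarrow\infty}$ gives $|f(\bm{x}')[k]-f(\bm{x})[k]|\le M'\norm{\bm{x}'-\bm{x}}_p$ for each $k$. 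So it suffices to show $\dnorm{\bm{J}(\bm{x})}_{p\hookrightarrow2}\le L/\sqrt{K}$ on $B$ (this dominates $\dnorm{\bm{J}(\bm{x})}_{p\hookrightarrow\infty}$ since $\norm{\bm{w}}_\infty\le\norm{\bm{w}}_2$).

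For (i), I would compute the output-space Fisher matrix explicitly. With $\mc{L}$ the cross-entropy and $\bm{q}=f(\bm{x})$, $\nabla_{f(\bm{x})}\mc{L}(\bm{e}_y,f(\bm{x}))=-\bm{e}_y/q_y$, hence $\bm{F}_f=\mb{E}_{y\sim\bm{q}}[q_y^{-2}\bm{e}_y\bm{e}_y^\top]=\diag(1/q_1,\dots,1/q_K)$. Because $f$ maps into the \emph{open} cube $(0,1)^K$, every diagonal entry exceeds $1$, so $\bm{F}_f^{1/2}\succeq\bm{I}$ and $\norm{\bm{F}_f^{1/2}\bm{v}}_2\ge\norm{\bm{v}}_2$ for all $\bm{v}$. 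Taking a supremum over $\norm{\bm{u}}_p\le1$ then yields $\dnorm{\bm{J}(\bm{x})}_{p\hookrightarrow2}\le\dnorm{\bm{F}_f^{1/2}\bm{J}(\bm{x})}_{p\hookrightarrow2}\le L/\sqrt{K}$ on $B$, which by the previous paragraph makes each $f(\cdot)[k]$ be $L/\sqrt{K}$-Lipschitz and $f$ be $L/\sqrt{K}$-Lipschitz in the $\ell_2$ output norm, hence $L$-Lipschitz (alternatively $\norm{f(\bm{x}')-f(\bm{x})}_2\le(\sum_k(L/\sqrt{K})^2\norm{\bm{x}'-\bm{x}}_p^2)^{1/2}=L\norm{\bm{x}'-\bm{x}}_p$ from the component bounds). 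Only here is $p\ge2$ used, through the norm comparisons and through Lemma~\ref{lemma:fisher information eigenvalue bounds}.

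For (ii), writing $f=\text{Softmax}\circ g$, the chain rule gives $\bm{J}(\bm{x})=\bm{S}\,\bm{J}_g(\bm{x})$ with $\bm{S}=\diag(\bm{q})-\bm{q}\bm{q}^\top$ the softmax differential. Since $\text{Softmax}$ is unchanged under adding a common constant to all logits, $g$ is only defined up to an $\bm{x}$-dependent shift $c(\bm{x})\bm{1}$, which leaves $\bm{J}_g$ (hence any gradient-based Lipschitz constant) unchanged; I would fix the gauge $g=\log\circ f$, legitimate because $\sum_k f(\bm{x})[k]=1$ forces $\text{Softmax}(\log f(\bm{x}))=f(\bm{x})$, so that $\nabla_{\bm{x}}g(\bm{x})[k]=\nabla_{\bm{x}}f(\bm{x})[k]/f(\bm{x})[k]$. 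Now invoke the \emph{left} inequality of Lemma~\ref{lemma:fisher information eigenvalue bounds}, which gives the sharper pointwise bound $\dnorm{\bm{J}(\bm{x})^\top\bm{F}_f^{1/2}}_{2,\infty}\le L/\sqrt{K}$, i.e.\ $f(\bm{x})[k]^{-1/2}\dnorm{\nabla_{\bm{x}}f(\bm{x})[k]}_2\le L/\sqrt{K}$; dividing by $f(\bm{x})[k]$ gives $\dnorm{\nabla_{\bm{x}}g(\bm{x})[k]}_2\le L\,(f(\bm{x})[k]\,K)^{-1/2}$. On $B$ (compact, a ball) $f$ is bounded away from $0$ because it maps into the open cube, so the right-hand side is uniformly bounded; converting to the dual norm $\norm{\cdot}_q$ and integrating along segments shows each $g(\cdot)[k]$ — and hence $g$ — is Lipschitz on $B$, and then the bookkeeping of (i) gives the $L$-Lipschitz and $L/\sqrt{K}$-Lipschitz statements for $g$.

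I expect the crux to be part (ii). The clean FIM argument of (i) has no direct analogue in logit space: the logit-space Fisher matrix is exactly $\bm{S}=\diag(\bm{q})-\bm{q}\bm{q}^\top$, which is rank-deficient (it annihilates $\bm{1}$, reflecting the gauge freedom) and has smallest positive eigenvalue tending to $0$ as $\min_k q_k\to0$, so the identity $\dnorm{\bm{S}^{1/2}\bm{J}_g(\bm{x})}_{p\hookrightarrow2}=\dnorm{\bm{F}_f^{1/2}\bm{J}(\bm{x})}_{p\hookrightarrow2}\le L/\sqrt{K}$ does \emph{not} on its own control $\bm{J}_g$; one really has to pass through a well-chosen representative of $g$ and exploit that $f$ cannot approach the boundary of $(0,1)^K$ on a compact $B$. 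Moreover, turning the pointwise estimate into the stated Lipschitz constant requires care with the constant and with the $\norm{\cdot}_2$-versus-$\norm{\cdot}_q$ dimension factor when $p>2$. The only routine point in (i) is the convexity of $B$ needed to justify the segment integrals.
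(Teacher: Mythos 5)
Your part (i) is correct and in fact cleaner than the paper's argument. The paper sandwiches $\frac{1}{K}\dnorm{\bm{J}(\bm{x})}_2^2$ between $\dnorm{\bm{J}(\bm{x})^\top}_{2,\infty}^2$, $\dnorm{\bm{J}(\bm{x})^\top\bm{F}_f^{1/2}}_{2,\infty}^2$, $\lambda_{\max}(\bm{F}_{\bm{x}})$ and $\dnorm{\bm{F}_f^{1/2}\bm{J}(\bm{x})}_{p\hookrightarrow 2}^2$ using Lemma \ref{lemma:fisher information eigenvalue bounds} together with the monotonicity of induced norms in $p$; your observation that $\bm{F}_f=\diag(1/f(\bm{x})[1],\dots,1/f(\bm{x})[K])\succeq\bm{I}$ because $f$ maps into $(0,1)^K$ collapses all of this to the one-line estimate $\dnorm{\bm{J}(\bm{x})}_{p\hookrightarrow 2}\leq\dnorm{\bm{F}_f^{1/2}\bm{J}(\bm{x})}_{p\hookrightarrow 2}\leq L/\sqrt{K}$, and even yields the slightly stronger conclusion that $f$ itself is $L/\sqrt{K}$-Lipschitz (hence $L$-Lipschitz). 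Your insistence on convexity of $B$ to justify the segment integrals is a hypothesis the paper uses silently when it invokes ``Lipschitz iff bounded Jacobian.''

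Part (ii) is where your argument does not close, and you half-admit as much. With the gauge $g=\log\circ f$ your pointwise estimate is $\dnorm{\nabla_{\bm{x}}g(\bm{x})[k]}_2\leq (L/\sqrt{K})\,f(\bm{x})[k]^{-1/2}$, whose right-hand side is \emph{strictly larger} than $L/\sqrt{K}$ (since $f(\bm{x})[k]<1$) and is not controlled by anything depending only on $L$ and $K$: the proposition permits an arbitrary $B\subseteq\mc{X}$, not necessarily compact, and even on a compact $B$ the constant you actually obtain is $(L/\sqrt{K})\sup_{\bm{x}\in B}\max_k f(\bm{x})[k]^{-1/2}$, not the stated $L/\sqrt{K}$. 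So the closing claim that ``the bookkeeping of (i) gives the $L$- and $L/\sqrt{K}$-Lipschitz statements for $g$'' does not follow from what precedes it. That said, your diagnosis that (ii) is the delicate point is vindicated by the paper itself: its entire proof of (ii) is the assertion that it ``follows immediately from (i) since Softmax is $1$-Lipschitz,'' which runs the implication in the wrong direction ($1$-Lipschitzness of Softmax transfers a Lipschitz bound from $g$ to $f$, not from $f$ back to $g$), and, as you correctly observe, the claim cannot hold for an arbitrary $g$ with $\text{Softmax}(g)=f$ because of the additive gauge freedom $g\mapsto g+c(\bm{x})\bm{1}$. In short: (i) stands and improves on the paper; (ii) has a genuine gap in your write-up, but that gap reflects a defect in the proposition and its published proof rather than a missing idea on your part.
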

		
		\noindent
		See Appendix \ref{appendix:sec:proof of proposition regional smoothness} for proof.
		Note that since Eq. (\ref{eq:frobenius norm regularizer}) is an upper bound on Eq. (\ref{eq:second order trades}), Proposition \ref{proposition:regional smoothness} holds for a bounded $\norm{ \bm{F}_f^{1/2} \bm{J}(\bm{x}) }_F \leq \frac{L}{\sqrt{K}}$ as well.
		
		In other words, in Eq. (\ref{eq:second order trades}), the loss term $\mc{L}(\bm{e}_y, f(\bm{x}))$ orients the decision boundary of $f$ while the FIM penalty biases towards $f$ that are smooth around the data points.
		We can tune the smoothness by increasing the weight $\epsilon^2 \beta$, which corresponds to larger localities around the data that should be smooth.
		This further informs our hypothesis from the previous section that the human-recognizability phenomenon requires two ingredients in order to manifest: orientation of the decision boundary and smoothness of the model itself.
		
		We now experimentally investigate the impact of smoothness regularization on the human-recognizability of adversarial perturbations.

\section{Experimental Results}\label{sec:cifar10 recognizability experiments}

%Given the clear connection between smoothness regularization and adversarial training discussed in Section \ref{sec:smoothness via adversarial training}, a key question is to what extent smoothness influences the human-recognizability of the model's features.

Per our analysis in the previous section, we now present an experiment providing evidence that a the human-recognizability of adversarial perturbations occurs when orientation of the decision boundary are smoothness of the decision surface are combined.
% gradient penalized defensive method is trained such that the resulting model is robust to adversarial examples. 
Our investigation in the prior section shows that this gives a model that is both smooth and has an adversarially aligned decision boundary. 

To design our experiment, we group the defenses into three categories: minimax defenses (FGSM, Madry, and TRADES), approximate defenses (first order Eq. (\ref{eq:gradient penalty training}) and second order Eq. (\ref{eq:frobenius norm regularizer})), and undefended models (defensive distillation and standard). 
The minimax and approximate defenses makes use of both decision boundary orientation and decision surface smoothness in the sense of Proposition \ref{proposition:regional smoothness}.
We also train a model via TRADES with a high penalty weight so that smoothness entirely dominates orientation of the decision boundary.
% are both smooth and have adversarially aligned decision boundaries. 
The first undefended model, defensive distillation, smooths the decision surface but does not adversarially orient the decision boundary since the underlying teacher model is trained via the standard learning program Eq. (\ref{eq:undefended training}). 
The second undefended model, standard training, is neither smooth nor decision boundary oriented. 

We then compare both the robustness to and human-recognizability of adversarial examples for each of the three categories. 
We find that when models are both smoothed and their decision boundaries are adversarially oriented, the models exhibit robustness and yield human-recognizable adversarial perturbations. 
We find that when the models are simply smoothed, as in defensive distillation, standard training, or highly penalized TRADES, the models do not yield human-recognizable adversarial perturbations.

\subsection{Methods}

For all experiments, we use a pre-activation ResNet18 and train on CIFAR-10. 
We augment the training data using random horizontal flipping and random cropping, normalizing each image according to the sample mean and standard deviation of the training data. 

We initialize all models' weights using the same random seed and train for 15 epochs using a batch size of 128 and cyclic learning rate schedule with maximum rate $0.2$ and minimum learning rate $0$: the learning rate increases for 7 epochs and decreases for the remaining epochs. 
For the optimizer, we use SGD with momentum 0.9 and weight decay $2 \times 10^{-4}$. 

For defensive distillation, we train a teacher model using softmax temperature $\tau = 100$.
We then distill this model again with the same temperature value.
% distilled version with $\tau = 100$ (as used in \citep{carlini2016defensive}).

For training the minimax and approximate defenses, we use an $\ell_\infty$ adversary with $\epsilon = 8 / 255$ where applicable -- note that the gradient and FIM penalties Eq. (\ref{eq:gradient penalty training}) and Eq. (\ref{eq:second order trades}) use regularization weights involving $\epsilon$.
For the gradient penalty approximate defenses, we choose a weight of $\beta = 64$, and we choose the norm order $q = 1$ (recall that this is the dual norm to the adversary's $\ell_p$ constraint) so as to correspond with an $\ell_\infty$-adversary. 
We implement this using double backpropagation in PyTorch \citep{paszke2019pytorch}.
For the FIM penalized defense, we use the Frobenius norm upper bound Eq. (\ref{eq:frobenius norm regularizer}), which we compute using a minor adaptation of \citep[Algorithm 1]{hoffman2019robust} (see Appendix \ref{appendix:computing the frobenius norm fim penalty} for more details).
We set $\beta = 1$.
% For the FIM defense we ... 
% 		That is, we train a model via TRADES Eq. (\ref{eq:trades}) and the Frobenius norm regularizer Eq. (\ref{eq:frobenius norm regularizer}), computational details of which can be found in Appendix \textbf{TODO: COMPUTATIONAL DETAILS}.
% 		The gist of our approach the random projection method proposed in \citep{hoffman2019robust}, which makes use of Jacobian-vector products that can be easily computed via automatic differentiation in PyTorch \citep{paszke2019pytorch}.
For the FGSM defense, we use the improvements suggested by \citep{wong2020fast}: random initialization of the perturbation $\bm{\delta}$ within the $\ell_p$ $\epsilon$-ball and early stopping.
For the Madry defense, we run each perturbation for 7 iterations with no random restarts.
For the TRADES defense, we train models with both $\beta = 6$ and $\beta = 10,000$, again running each perturbation for 7 iterations with no random restarts.

% I'm not sure what the following sentence means. 
% To evaluate the robustness, we use an $\ell_\infty$-adversary with stepsize $2 / 255$, 50 total iterations, and 10 random restarts. We evaluate the rate at which the test accuracy falls 

To evaluate the robustness of the defenses, we generate a robust accuracy profile against an $\ell_\infty$ adversary (Figure \ref{fig:cifar10 robust accuracy}) on the CIFAR-10 test-set. 
We use Madry PGD attacks Eq. (\ref{eq:madry pgd iterates}) with an $\ell_\infty$ adversary using a stepsize $\alpha = 2 / 255$ for 50 iterations initialized randomly within the $\epsilon$-ball; if an adversarial example is not found (i.e. the classifier's prediction is not changed), we restart the perturbation up to a maximum of 10 total restarts.
We sweep the perturbation budget from 0/255, which gives the baseline accuracy of each model, and a maximum budget of 50/255, which is much larger than the perturbation budget that was used during training. 

To generate the adversarial examples to evaluate human recognizability, we use the targeted variant of Madry PGD Eq. (\ref{eq:madry pgd iterates}) (i.e. we select a class and approximate Eq. (\ref{eq:targeted attack})), setting the perturbation budget to $\epsilon = \infty$ i.e. we do not project the iterates at all since we are not concerned with remaining imperceptible.
We use a stepsize of $2 / 255$ and run each perturbation for 2,048 iterations.

All training and experiments are implemented in PyTorch \citep{paszke2019pytorch} and run on a single Tesla V100 GPU.
The source code for all of our experiments is made available\footnote{
    \url{https://github.com/cmu-sei/smoothness-and-recognizability}
}.

\subsection{Results}
% What did you find?
% Why does this matter? 

Figure \ref{fig:cifar10 robust accuracy} displays the robust accuracy profiles for the seven defenses we consider and replicates findings from prior work. 
Note that profiles can be grouped by the category of defense. 
The undefended models have the highest accuracy on an unperturbed test set, but the accuracy quickly falls to zero for small amounts of perturbation. 
The minimax defenses have a different, more robust pattern, where their unperturbed accuracy is lower, but the model's accuracy decreases slowly as the strength of the perturbation increases. 
For this model on these data, TRADES is the most robust, followed closely by PGD and FGSM. 
The approximate methods fall somewhere between the minimax and undefended models, except that the unperturbed accuracy of the approximate methods is the lowest of the three categories.  
Note that we do not include the TRADES $\beta = 10,000$ model in this robustness evaluation because it converged to a classifier that predicts every input as the class \textit{ship} (the multi-class analogue of an all-ones classifier), which is completely robust but uninteresting for this evaluation.

 \begin{figure}
 	\centering
 	\includegraphics[width=.65\textwidth]{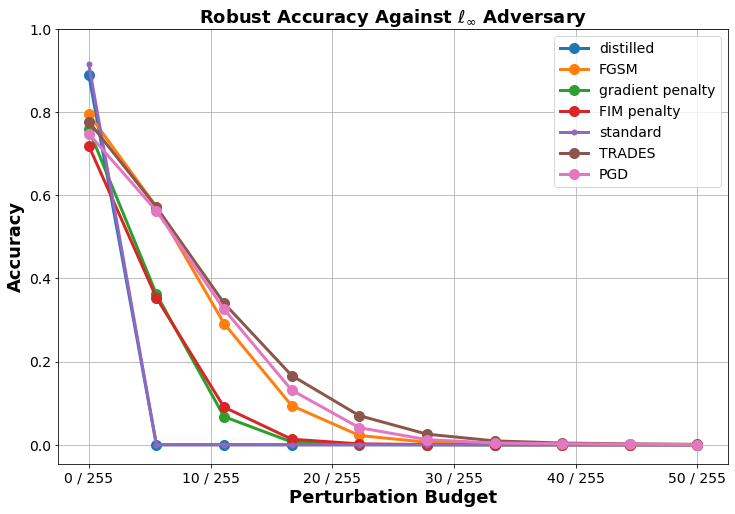}
 	\caption{
		Robust accuracy profiles against an $\ell_\infty$ adversary for the models trained in Section \ref{sec:cifar10 recognizability experiments}.
 	}
 	\label{fig:cifar10 robust accuracy}
 \end{figure}
	 
Figure \ref{fig:unconstrained adversarial perturbations} shows adversarial perturbations generated against the defended models, organized by the category of the defense. 
Part (a) gives the initial, random starts of all of the adversarial examples in a given class. 
Parts (b) and (c) give the adversarial examples generated against undefended models, a model with standard training and defensive distillation, respectively. 
Parts (d) and (e) display the results for the approximate methods, gradient penalization, and FIM penalization respectively. 
Parts (f)-(h) display the results for the minimax methods, for FGSM, Madry, and TRADES respectively. 
Part (i) shows the TRADES model trained with a high penalty $\beta = 10,000$ to be an all-\textit{ship} classifier.

		\begin{figure}
			\centering
			\subfloat[
			    The initial points to adversarial perturbations in (b)-(i).
			]{
			    \includegraphics[width=.8\textwidth]{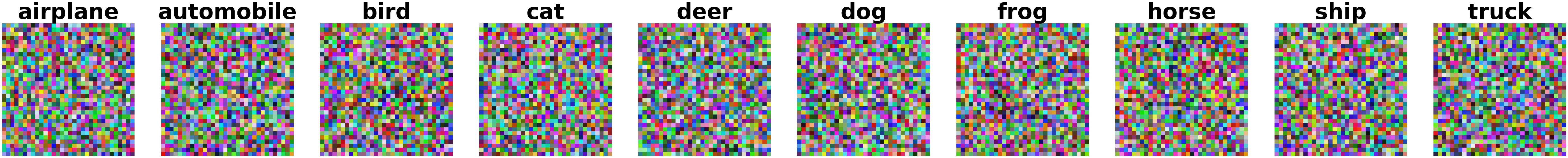}
			}
			\hfill 
			\subfloat[
				Standard model.			
			]{
				\includegraphics[width=.8\textwidth]{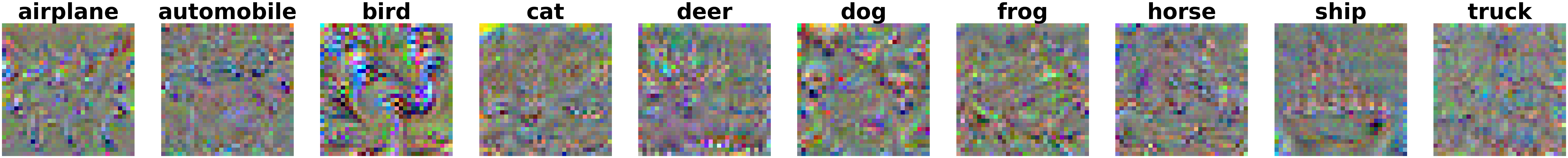}
			}
			\hfill
			\subfloat[
				Distilled model $\tau = 100$ -- distilled from the standard model in (a).
			]{
				\includegraphics[width=.8\textwidth]{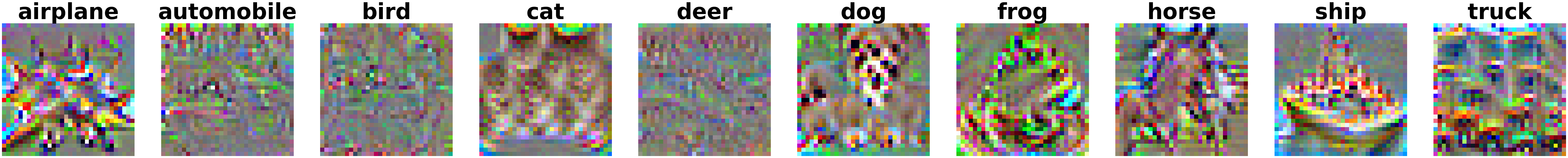}
			}
			\hfill 
			\subfloat[
				Gradient penalized model, $\beta = 64$.			
			]{
				\includegraphics[width=.8\textwidth]{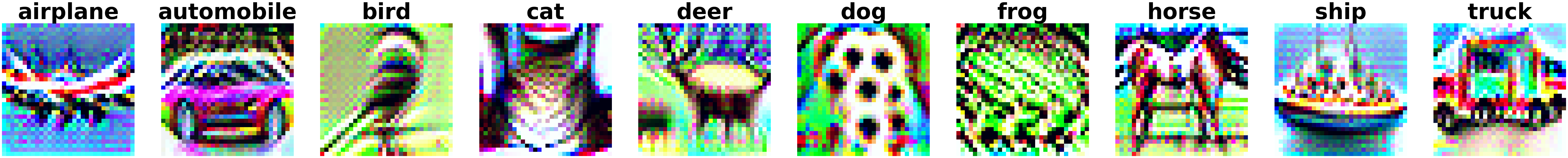}
			}
			\hfill
			\subfloat[
				FIM penalized model, $\beta = 1$.			
			]{
				\includegraphics[width=.8\textwidth]{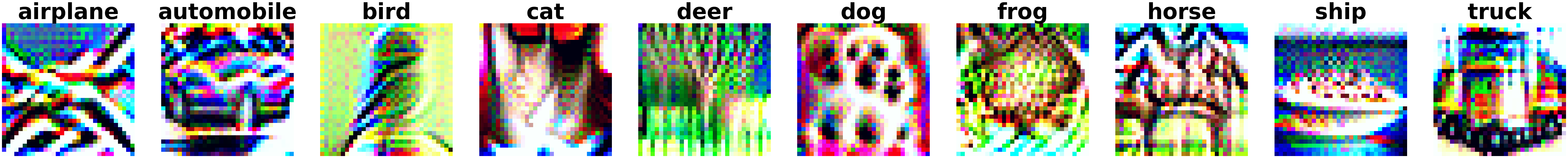}
			}
			\hfill
			\subfloat[
				FGSM $\ell_\infty$ model.			
			]{
				\includegraphics[width=.8\textwidth]{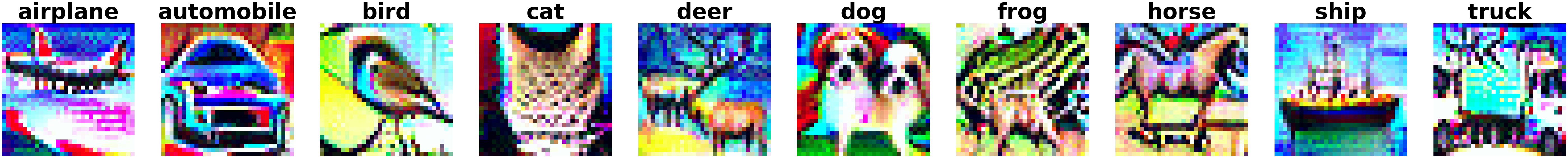}
			}
			\hfill
			\subfloat[
				Madry PGD $\ell_\infty$ model.			
			]{
				\includegraphics[width=.8\textwidth]{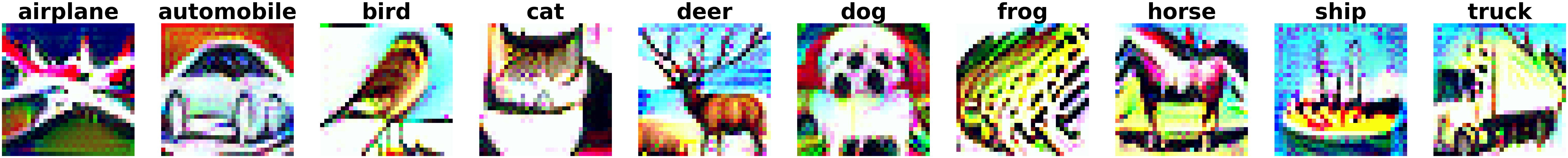}
			}
			\hfill
			\subfloat[
				TRADES $\ell_\infty$ model, $\beta = 6$.			
			]{
				\includegraphics[width=.8\textwidth]{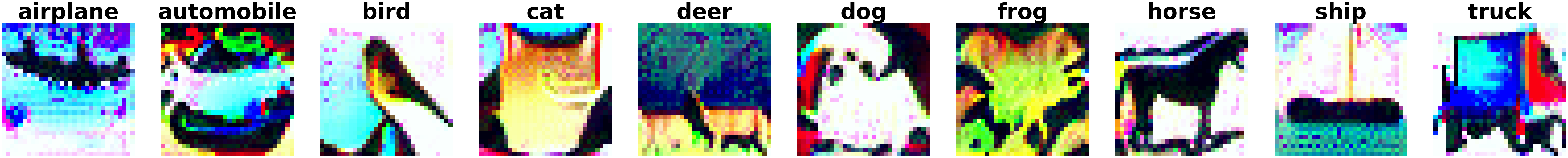}
			}
			\hfill
			\subfloat[
			    TRADES $\ell_\infty$ model, $\beta = 10,000$.
			]{
				\includegraphics[width=.8\textwidth]{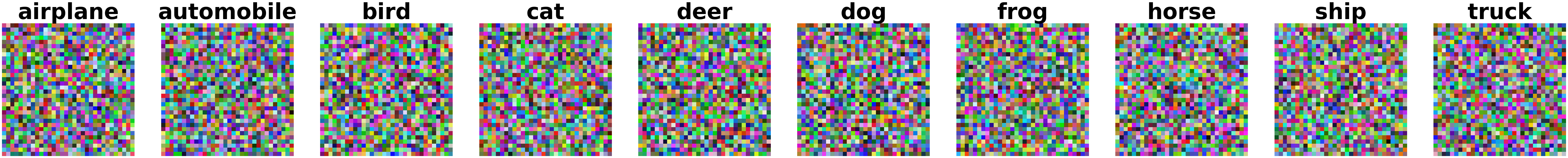}
			}
			\caption{
				Unconstrained adversarial perturbations for various pre-activation ResNet18 models.	
				The robust accuracy profiles of each model are shown in Figure \ref{fig:cifar10 robust accuracy}.
				Each image is the result of a distinct perturbation, which is initialized with random noise.
				In each column, we initialize the adversarial perturbations with the random noise shown in (a).
			}
			\label{fig:unconstrained adversarial perturbations}
		\end{figure}

Similarly to how the robustness accuracy profiles quantitatively distinguished between the three categories of defenses, the adversarial perturbations generated qualitatively distinguish between the three categories. 
First, the undefended models generated perturbations that are more perceptually similar to the initial point than they are to a human characterization of the target class. 
They are unrecognizable, although some of the standard model's perturbations are identifiable retrospectively given knowledge of the target class -- the bird and boat perturbations, for example. 
Similarly, the distilled model has various perceptually interesting patterns that appear, but those patterns are nonetheless largely insufficient to identify the class without knowledge of the target label.
It is worth pointing out, however, that the distilled model is distinctly more human-recognizable than the standard model.
For defensive distillation, the original teacher model's decision boundary orientation percolates to the distilled model while the high temperature smooths the decision surface, thus providing some notion of the smoothness and orientation coupling that we have considered throughout this work.

In contrast, the adversarial perturbations generated by the approximate defenses appear qualitatively different in kind from the starting images. 
These have patterns and colors that appear closer to exemplars of their classes relative to random noise, such that specific classes have recognizable features. 
For example, the gradient penalized method has recognizable exemplars of the various classes: automobile (tires and car shape), bird (pose, tail, and head), deer (shape, with antler-like structures), and ship (mast and water). 
Similarly, the FIM penalized method appears to have scattered versions of the gradient penalized examples.  

Similarly, the minimax defenses appear qualitatively different in kind from both the starting images and the adversarial perturbations generated from the approximate methods. 
The minimax defenses begin to show adversarial examples that are recognizable perturbations of the target classes, while the approximate methods merely had recognizable features. 
For example the FGSM dog, the Madry PGD deer, and the Madry PGD truck are particularly striking. % I don't know what to say about TRADES here; I'm used to TRADES being the best and it isn't. 
Interestingly, the TRADES model seems to have less human-recognizable perturbations despite being the strongest smoothness regularizer of the three minimax defenses.
Since the approximate methods are also distinctly less human-recognizable than the FGSM and Madry PGD perturbations, it may be that decoupling the smoothness and orientation terms in the manner of TRADES Eq. (\ref{eq:trades}) requires more careful tuning of $\beta$ in order to match the same level of human-recognizability.

The highly penalized TRADES $\beta = 10,000$ model in Figure \ref{fig:unconstrained adversarial perturbations}(i) shows that if smoothness is practically speaking the only objective, then human-recognizability is lost.
This makes sense, since a classifier that only predicts one of the $K$ classes will yield adversarial gradient directions that are meaningless and, moreover, close to zero.

\section{Discussion}

% We conclude the paper by discussing the implications of the suggested trade-off between models thatare robust to`padversaries and the human-recognizability of their adversarial perturbations.  Namely, thatmachine  learning  applications  that  require  both  robustness  and  data  privacy  require  careful  thought  andfurther research.

Consider a situation in which a machine learning model is part of a high-stakes decision process that relies on sensitive training data. 
In cases like these, the high-stakes nature of the decision process suggests that the model needs to be robust to adversarial examples, but also the sensitivity of the data suggests that the model needs to also be protected from revealing certain types of information. 

In this paper, we have demonstrated that certain state of the art defenses against adversarial attacks lead to human recognizable features in adversarial examples generated from those defended models. 
Other works like \citet{kaur2019perceptually}, \citet{grathwohl2019your}, and \citet{yang2019me} have noticed the same phenomenon for other kinds of defenses, suggesting an incapability of the human-recognizability phenomenon.
We do not attempt to develop a privacy attack in this work, but we do claim that this kind of human-recognizability present in the adversarial examples suggests that privacy attacks are possible.
It is important for future research to pursue this line of inquiry to better understand the ways in which such privacy attacks may be possible and practically applicable.

For example, anecdotally, we have found that adversarial perturbations targeted towards the CIFAR-10 horse class often recover the presence of rider on the horses. 
Indeed, an inspection of the CIFAR-10 training set suggests that roughly 20\% of the exemplars of that class have riders present. 
If this information was not known \textit{a priori} to an attacker, they would be able to determine it if they had access to a robustly trained model's weights. 

%, such as in the medical field \citep{finlayson_adversarial_2019}
%It is not clear from the results that we present that a privacy attack that would be  harmful to medical patients is possible to construct. 

\section{Conclusion}
In this work, we investigated the human-recognizability phenomenon of adversarially trained deep image classifiers that has been noted in a myriad of other works.
In particular, we investigated the human-recognizability of adversarial perturbations to models that were trained using both gradient and Jacobian-based regularization, which are fundamentally approximations of canonical adversarial training and TRADES adversarial training respectively.
We identified that state-of-the-art adversarial training approaches involve both a smoothness term and a term that orients the decision boundary, and that this coupling appears to be a sufficient condition for the human-recognizability of large-$\epsilon$ adversarial perturbations, which fundamentally rely on local information vis-\`a-vis gradient directions.
We demonstrated this sufficiency qualitatively using visualizations of the adversarial perturbations.
Finally, we discussed some implications of the human-recognizability phenomenon in the privacy setting, suggesting a direction for future research to consider.

\section{Acknowledgements}

Copyright 2020 Carnegie Mellon University. \\

\noindent This material is based upon work funded and supported by the Department of Defense under Contract No. FA8702-15-D-0002 with Carnegie Mellon University for the operation of the Software Engineering Institute, a federally funded research and development center. \\

\noindent The view, opinions, and/or findings contained in this material are those of the author(s) and should not be construed as an official Government position, policy, or decision, unless designated by other documentation. \\

\noindent NO WARRANTY. THIS CARNEGIE MELLON UNIVERSITY AND SOFTWARE ENGINEERING INSTITUTE MATERIAL IS FURNISHED ON AN "AS-IS" BASIS. CARNEGIE MELLON UNIVERSITY MAKES NO WARRANTIES OF ANY KIND, EITHER EXPRESSED OR IMPLIED, AS TO ANY MATTER INCLUDING, BUT NOT LIMITED TO, WARRANTY OF FITNESS FOR PURPOSE OR MERCHANTABILITY, EXCLUSIVITY, OR RESULTS OBTAINED FROM USE OF THE MATERIAL. CARNEGIE MELLON UNIVERSITY DOES NOT MAKE ANY WARRANTY OF ANY KIND WITH RESPECT TO FREEDOM FROM PATENT, TRADEMARK, OR COPYRIGHT INFRINGEMENT. \\

\noindent [DISTRIBUTION STATEMENT A] This material has been approved for public release and unlimited distribution.  Please see Copyright notice for non-US Government use and distribution. \\

\noindent Internal use:* Permission to reproduce this material and to prepare derivative works from this material for internal use is granted, provided the copyright and “No Warranty” statements are included with all reproductions and derivative works. \\

\noindent External use:* This material may be reproduced in its entirety, without modification, and freely distributed in written or electronic form without requesting formal permission. Permission is required for any other external and/or commercial use. Requests for permission should be directed to the Software Engineering Institute at permission@sei.cmu.edu. \\

\noindent * These restrictions do not apply to U.S. government entities. \\

\noindent DM20-0917

%%%%%%%%%%%%%%%%%%%%%%%%%%%
% bibliography
%%%%%%%%%%%%%%%%%%%%%%%%%%%
\bibliography{bibliography.bib}

%%%%%%%%%%%%%%%%%%%%%%%%%%%
% appendix
%%%%%%%%%%%%%%%%%%%%%%%%%%%
\newpage
\appendix
\section{Notation}\label{appendix:sec:notation}
\begin{center}
	\def\arraystretch{1.75}
	\begin{tabular}{|r|l|}
		\hline
		\textbf{Notation} & \textbf{Description}
		\\\hline
		$\real$ & The set of real numbers.
		\\\hline 
		$\mc{X}$ & The input domain of a classification model.
		\\\hline
		$a, \ \bm{a}, \ \bm{A}$ & Scalar, vector, matrix respectively.
		\\\hline
		$\bm{a}[i]$ & The $i$-th entry of a vector $\bm{a}$.
		\\\hline
		$\bm{e}_i$ & The $i$-th canonical basis vector i.e. the $i$-th column of the identity \\ & matrix $\bm{I}$ i.e. a one-hot vector with $\bm{e}_i[i] = 1$.
		\\\hline 
		$\mb{E}$ & Expectation operator.
		\\\hline 
		$\bm{a} \circ \bm{b}$ & Hadamard product (entrywise multiplication) between $\bm{a}$ and $\bm{b}$.
		\\\hline
		$\norm{\bm{A}}_{p \hookrightarrow q} = \max_{\bm{x}} \ \norm{ \bm{A x} }_q / \norm{\bm{x}}_p$ & Vector-induced matrix norm.
		\\\hline
		$\norm{\bm{A}}_{p,q} = \dnorm{ \begin{bmatrix} \norm{ \bm{a}_1 }_q & \cdots & \norm{ \bm{a}_m }_q \end{bmatrix} }_p$ & Entrywise matrix norm i.e. $p$-norm of column-wise $q$-norm.
		\\\hline
		$\norm{\cdot}_F$ & Frobenius norm (equivalent to $\norm{\cdot}_{2,2}$).
		\\\hline
		$\text{Softmax} \st \real^K \to (0,1)^k$ & The softmax function defined entrywise as \\ & $\text{Softmax}(\bm{a})[k] \equiv \exp( a_k ) / \sum_i \exp(a_i)$.
		\\\hline
	\end{tabular}
\end{center}

\section{Derivation of Eq. (\ref{eq:second order trades})}\label{appendix:derivation of second order trades}
Let $f \st \mc{X} \to (0,1)^K$ be a classifier over $K$-many classes, where the domain is denoted by $\mc{X} \subseteq \real^d$.
Using a second-order Taylor expansion, it is straightforward to verify that
\begin{equation}\label{appendix:eq:second order taylor expansion}
\begin{aligned}
	\text{KL}( f(\bm{x}) \,||\, f(\bm{x} + \bm{\delta}) ) &= \mb{E}_y \left[ \log \frac{ f(\bm{x})[y] }{ f(\bm{x} + \bm{\delta})[y] } \right] %\\[.5em]
		= \frac{1}{2} \bm{\delta}^\top \bm{F}_{\bm{x}} \bm{\delta} + \mc{O}( \dnorm{\bm{\delta}}_2^3 ),
\end{aligned}
\end{equation}
where $\bm{F}_{\bm{x}}$ is the FIM
\begin{equation}\label{appendix:eq:fim}
\begin{aligned}
	\bm{F}_{\bm{x}} &\equiv \mb{E}_y \left[ \mathlarger{\mathlarger{(}} \nabla_{\bm{x}} \mc{L}(\bm{e}_y, f(\bm{x})) \mathlarger{\mathlarger{)}} \, \mathlarger{\mathlarger{(}} \nabla_{\bm{x}} \mc{L}(\bm{e}_y, f(\bm{x})) \mathlarger{\mathlarger{)}}^\top \right]  \\[.5em]
		&= \sum_y f(\bm{x})[y] \, \mathlarger{\mathlarger{(}} \nabla_{\bm{x}} \mc{L}(\bm{e}_y, f(\bm{x})) \mathlarger{\mathlarger{)}} \, \mathlarger{\mathlarger{(}} \nabla_{\bm{x}} \mc{L}(\bm{e}_y, f(\bm{x})) \mathlarger{\mathlarger{)}}^\top \\[.5em]
%%		&= \mb{E}_{y} \left[ \mathlarger{\mathlarger{(}} \nabla_{\bm{x}} \mc{L}(\bm{e}_y, f(\bm{x})) \mathlarger{\mathlarger{)}} \, \mathlarger{\mathlarger{(}} \nabla_{\bm{x}} \mc{L}(\bm{e}_y, f(\bm{x})) \mathlarger{\mathlarger{)}}^\top \right]  \\[.5em]
		&= \sum_y f(\bm{x})[y] \, \left( \nabla_{\bm{x}} \log f(\bm{x})[y] \right) \left( \nabla_{\bm{x}} \log f(\bm{x})[y] \right)^\top \qquad (\text{for } \mc{L} \text{ cross-entropy}), \\[.5em]
\end{aligned}
\end{equation}
since the cross-entropy loss is $\mc{L}( \bm{e}_y, f(\bm{x}) ) = -\log f(\bm{x})[y]$.
By defining another FIM 
\begin{equation}\label{appendix:eq:fim 2}
\begin{aligned}
	\bm{F}_f &\equiv \mb{E}_y \left[ \mathlarger{\mathlarger{(}} \nabla_{f(\bm{x})} \mc{L}(\bm{e}_y, f(\bm{x})) \mathlarger{\mathlarger{)}} \, \mathlarger{\mathlarger{(}} \nabla_{f(\bm{x})} \mc{L}(\bm{e}_y, f(\bm{x})) \mathlarger{\mathlarger{)}}^\top \right] \\[.5em]
	%(\nabla_f \log f(\bm{x})[y] ) (\nabla_f \log f(\bm{x})[y] )^\top \right]  \\[.5em]
		&= \begin{bmatrix}
			f(\bm{x})[1] \\[.5em]
			&  \ddots \\[.5em]
			&& f(\bm{x})[K]
		\end{bmatrix}^{-1} \qquad (\text{for } \mc{L} \text{ cross-entropy}),
\end{aligned}
\end{equation}
we have by the chain rule that
$
	\bm{F}_{\bm{x}} = \bm{J}( \bm{x} )^\top \bm{F}_f \bm{J}( \bm{x} ),
$
where $\bm{J}(\bm{x})$ is the Jacobian of the model $f$ with respect to its input.
Note that $\bm{F}_f \succeq 0$ regardless of the choices of $\mc{L}$ and $f$, thus $\rank(\bm{F}_{\bm{x}}) \leq K$ and hence $\bm{F}_{\bm{x}}$ is singular when the data dimension $d > K$.
In fact, for the cross-entropy loss, we have that $\bm{F}_f \succ 0$.

Plugging the approximation Eq. (\ref{appendix:eq:second order taylor expansion}) into the inner maximization of the TRADES program Eq. (\ref{eq:trades}) yields
\begin{equation*}
\begin{aligned}
	\max_{\bm{\delta} \in B_\epsilon} \, \frac{1}{2} \bm{\delta}^\top \bm{F}_{\bm{x}} \bm{\delta} 
%		&= \max_{ \norm{\bm{\delta}}_p = \epsilon } \, \frac{1}{2} \bm{\delta}^\top \bm{F}_{\bm{x}} \bm{\delta}
		&= \max_{ \norm{\bm{\delta}}_p^2 = 1 } \, \frac{\epsilon^2}{2} \bm{\delta}^\top \bm{F}_{\bm{x}} \bm{\delta} \\[.5em]
		&= \max_{ \norm{\bm{\delta}}_p^2 = 1 } \, \frac{\epsilon^2}{2} \bm{\delta}^\top  \bm{J}( \bm{x} )^\top \bm{F}_f^{1/2} \bm{F}_f^{1/2} \bm{J}( \bm{x} ) \bm{\delta} \\[.5em]
		&= \max_{\bm{\delta}} \, \frac{\epsilon^2}{2} \frac{ \dnorm{ \bm{F}_f^{1/2} \bm{J}(\bm{x}) \bm{\delta} }_2^2 }{ \norm{\bm{\delta}}_p^2 } \\[.5em]
		&= \frac{\epsilon^2}{2} \dnorm{ \bm{F}_f^{1/2} \bm{J}(\bm{x}) }_{p \hookrightarrow 2}^2,
\end{aligned}
\end{equation*}
since $\bm{F}_f \succ 0$ allows the square root $\bm{F}_{\bm{x}}^{1/2}$, whence TRADES is approximated as
\begin{equation}\label{appendix:eq:second order trades}
	\minimize_{f \in \mc{F}} \, \mb{E}_{(\bm{x}, y) \sim \mc{D}} \left[ \mc{L}(\bm{e}_y, f(\bm{x})) + \frac{\beta \epsilon^2}{2} \dnorm{ \bm{F}_f^{1/2} \bm{J}(\bm{x}) }_{p \hookrightarrow 2}^2 \right].
\end{equation}

\section{Proof of Proposition \ref{proposition:regional smoothness}}\label{appendix:sec:proof of proposition regional smoothness}
%We will need another FIM
%\begin{equation}\label{appendix:eq:fim 2}
%\begin{aligned}
%	\bm{F}_f &\equiv \mb{E}_y \left[ (\nabla_f \log f(\bm{x})[y] ) (\nabla_f \log f(\bm{x})[y] )^\top \right]  \\[.5em]
%		&= \begin{bmatrix}
%			f(\bm{x})[1] \\
%			&  \ddots \\
%			&& f(\bm{x})[K]
%		\end{bmatrix}^{-1}.
%\end{aligned}
%\end{equation}
%With $\bm{F}_f$ defined above in Eq. (\ref{appendix:eq:fim 2}) and $\bm{J}(\bm{x})$ the Jacobian of $f$ at $\bm{x}$, we can write
%\begin{equation*}
%\begin{aligned}
%	\bm{\delta}^\top \bm{F}_{\bm{x}} \bm{\delta} &= \mathlarger{\mathlarger{\langle}} \bm{J}(\bm{x}) \bm{\delta}, \ \bm{F}_f \bm{J}(\bm{x}) \bm{\delta} \mathlarger{\mathlarger{\rangle}} \\[.5em]
%		&= \sum_k \frac{1}{ f(\bm{x})[k] } \mathlarger{\langle} \nabla_{\bm{x}} f(\bm{x})[k], \, \bm{\delta} \mathlarger{\rangle}^2
%\end{aligned}
%\end{equation*}
%where $\bm{J}_{\bm{x}}$ is the Jacobian operator with respect to $\bm{x}$.
%%In the binary classification scenario, this is a special case of \citep[Theorem 1]{lin2019wasserstein}.
%The FIM $\bm{F}_f$ is clearly full rank (rank $K$) and positive definite (since $f \st \mc{X} \to (0,1)^K$), thus $\rank(\bm{F}_{\bm{x}}) \leq K$ and so $\bm{F}_{\bm{x}}$ is singular when $\text{dim} \mc{X} > K$.
%Moreover, $\bm{F}_f$ is a Riemannian metric tensor whereas $\bm{F}_{\bm{x}}$ need not be.

Recall the FIMs $\bm{F}_{\bm{x}}$ Eq. (\ref{appendix:eq:fim}) and $\bm{F}_f$ Eq. (\ref{appendix:eq:fim 2}), as well as the Jacobian of $f$ at $\bm{x}$, denoted $\bm{J}(\bm{x})$.
Now, to prove Proposition \ref{proposition:regional smoothness}, we will need two intermediary lemmas.
The first is Lemma \ref{appendix:lemma:fisher information eigenvalue bounds}, which provides some useful spectral bounds on the FIM $\bm{F}_{\bm{x}}$.

\begin{lemma}[Section \ref{sec:trades} Lemma \ref{lemma:fisher information eigenvalue bounds}]\label{appendix:lemma:fisher information eigenvalue bounds}
	Let $\mc{L}$ be the cross-entropy loss and assume that $f \st \mc{X} \to (0,1)^K$ is once-differentiable.
	For each $\bm{x} \in \mc{X}$, the FIM $\bm{F}_{\bm{x}}$ satisfies
	\begin{equation}\label{appendix:eq:fim max eigenvalue bounds}
		\dnorm{ \bm{J}(\bm{x})^\top \bm{F}_f^{1/2} }_{2,\infty}^2 \leq \lambda_{\max}(\bm{F}_{\bm{x}}) \leq \dnorm{ \bm{J}(\bm{x})^\top \bm{F}_f^{1/2} }_F^2.
	\end{equation}
	Moreover,
	\begin{equation}\label{appendix:eq:jacobian penalty upper bound}
		\dnorm{ \bm{J}(\bm{x})^\top \bm{F}_f^{1/2} }_F \geq \dnorm{ \bm{J}(\bm{x}) }_F.
	\end{equation}
\end{lemma}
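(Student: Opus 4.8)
The plan is to reduce the whole statement to elementary facts about singular values and diagonal matrices, using the factorization of the FIM already established in Appendix~\ref{appendix:derivation of second order trades}. Recall from there that, for the cross-entropy loss, $\bm{F}_f = \diag(1/f(\bm{x})[1], \dots, 1/f(\bm{x})[K]) \succ 0$ (Eq.~(\ref{appendix:eq:fim 2})), so it has a symmetric positive-definite square root $\bm{F}_f^{1/2} = \diag(f(\bm{x})[1]^{-1/2}, \dots, f(\bm{x})[K]^{-1/2})$, and $\bm{F}_{\bm{x}} = \bm{J}(\bm{x})^\top \bm{F}_f \bm{J}(\bm{x}) = \bm{G}^\top \bm{G}$, where $\bm{G} \equiv \bm{F}_f^{1/2} \bm{J}(\bm{x})$ is a $K \times d$ matrix (well-defined since $f$ is once-differentiable). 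Two observations follow immediately: $\lambda_{\max}(\bm{F}_{\bm{x}}) = \lambda_{\max}(\bm{G}^\top \bm{G}) = \sigma_{\max}(\bm{G})^2$, and $\|\bm{J}(\bm{x})^\top \bm{F}_f^{1/2}\|_F = \|\bm{G}^\top\|_F = \|\bm{G}\|_F$. The $k$-th row of $\bm{G}$ is $f(\bm{x})[k]^{-1/2}\, \nabla_{\bm{x}} f(\bm{x})[k]^\top$, which I will use for the Frobenius computations.

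For the upper bound in Eq.~(\ref{appendix:eq:fim max eigenvalue bounds}) I would simply write $\lambda_{\max}(\bm{F}_{\bm{x}}) = \sigma_{\max}(\bm{G})^2 \le \sum_j \sigma_j(\bm{G})^2 = \|\bm{G}\|_F^2 = \|\bm{J}(\bm{x})^\top \bm{F}_f^{1/2}\|_F^2$. For the lower bound, note that $\bm{F}_{\bm{x}} = \bm{G}^\top\bm{G}$ is PSD, so each of its diagonal entries is at most $\lambda_{\max}(\bm{F}_{\bm{x}})$; the $i$-th diagonal entry is $(\bm{G}^\top\bm{G})_{ii} = \|\bm{G}\bm{e}_i\|_2^2$, which is the squared Euclidean norm of the $i$-th row of $\bm{G}^\top = \bm{J}(\bm{x})^\top \bm{F}_f^{1/2}$. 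Maximizing over $i$ gives $\|\bm{J}(\bm{x})^\top \bm{F}_f^{1/2}\|_{2,\infty}^2 \le \lambda_{\max}(\bm{F}_{\bm{x}})$, which is the desired inequality (equivalently: $\sigma_{\max}(\bm{G}) = \max_{\|\bm{u}\|_2 = 1}\|\bm{G}\bm{u}\|_2 \ge \|\bm{G}\bm{e}_i\|_2$ for every $i$).

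For Eq.~(\ref{appendix:eq:jacobian penalty upper bound}), I would compute directly from the rows of $\bm{G}$: $\|\bm{J}(\bm{x})^\top \bm{F}_f^{1/2}\|_F^2 = \|\bm{G}\|_F^2 = \sum_{k=1}^K \frac{1}{f(\bm{x})[k]} \, \|\nabla_{\bm{x}} f(\bm{x})[k]\|_2^2$. Since the codomain of $f$ is $(0,1)^K$, every coordinate $f(\bm{x})[k]$ lies strictly in $(0,1)$, hence $1/f(\bm{x})[k] > 1$; dropping each such factor down to $1$ yields $\|\bm{G}\|_F^2 \ge \sum_{k=1}^K \|\nabla_{\bm{x}} f(\bm{x})[k]\|_2^2 = \|\bm{J}(\bm{x})\|_F^2$, and taking square roots finishes the proof.

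There is no real obstacle here — everything is linear algebra together with the fact that $\bm{F}_f$ is diagonal with entries strictly larger than $1$. The only points requiring care are (a) recording the standing assumptions that $f$ is once-differentiable (so that $\bm{J}(\bm{x})$ exists) and that the cross-entropy form of $\bm{F}_f$ from Appendix~\ref{appendix:derivation of second order trades} is in force (so that $\bm{F}_f^{1/2}$ is this explicit positive diagonal matrix), and (b) keeping the matrix-norm bookkeeping straight, namely that $\|\bm{J}(\bm{x})^\top \bm{F}_f^{1/2}\|_{2,\infty}$ should be read as the largest Euclidean norm among the rows of $\bm{J}(\bm{x})^\top \bm{F}_f^{1/2}$ (equivalently the largest Euclidean column norm of $\bm{G} = \bm{F}_f^{1/2}\bm{J}(\bm{x})$), which is exactly the quantity picked out by the diagonal-entry bound above.
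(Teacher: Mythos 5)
Your linear algebra is sound, and two of the three inequalities are proved essentially as the paper does: the upper bound $\lambda_{\max}(\bm{F}_{\bm{x}}) \le \|\bm{G}\|_F^2$ via $\sigma_{\max}(\bm{G})^2 \le \sum_j \sigma_j(\bm{G})^2$ (the paper just calls this Cauchy--Schwarz), and Eq.~(\ref{appendix:eq:jacobian penalty upper bound}) by the identical row-by-row computation $\sum_k f(\bm{x})[k]^{-1}\|\nabla_{\bm{x}} f(\bm{x})[k]\|_2^2 \ge \sum_k\|\nabla_{\bm{x}} f(\bm{x})[k]\|_2^2$. For the lower bound your route is genuinely different and arguably cleaner: the paper invokes \citep[Theorem 1]{merikoski2004inequalities} on eigenvalues of sums of PSD matrices and then discards the $\lambda_{\min}$ term, whereas you use only the elementary fact that the diagonal entries of a PSD matrix are dominated by its largest eigenvalue.

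The one substantive issue is \emph{which} matrix's diagonal you bound. You read $\|\bm{J}(\bm{x})^\top\bm{F}_f^{1/2}\|_{2,\infty}$ as the largest row norm of $\bm{J}(\bm{x})^\top\bm{F}_f^{1/2}$, i.e.\ the largest column norm of $\bm{G}$, i.e.\ $\max_i(\bm{G}^\top\bm{G})_{ii}$ with $i$ ranging over the $d$ input coordinates. The paper's own proof makes clear it means the largest \emph{column} norm of $\bm{J}(\bm{x})^\top\bm{F}_f^{1/2}$ (equivalently the largest \emph{row} norm of $\bm{G}$), namely $\max_k f(\bm{x})[k]^{-1}\|\nabla_{\bm{x}} f(\bm{x})[k]\|_2^2$ with $k$ ranging over the $K$ classes --- these are the diagonal entries of $\bm{G}\bm{G}^\top$, not of $\bm{G}^\top\bm{G}$, and the two maxima are different quantities in general. (The paper's notation table is itself inconsistent with its proof here, so the ambiguity is not entirely your fault; but the intended reading is pinned down by the downstream use in Proposition~\ref{proposition:regional smoothness}, whose proof needs $\|\bm{J}(\bm{x})^\top\bm{F}_f^{1/2}\|_{2,\infty}^2 \ge \frac{1}{K}\|\bm{J}(\bm{x})\|_F^2$ --- true for the max-over-$k$ quantity, but degrading to a factor $1/d$ for yours.) Fortunately your argument transfers verbatim: $\bm{G}\bm{G}^\top$ and $\bm{G}^\top\bm{G} = \bm{F}_{\bm{x}}$ share their largest eigenvalue, so applying the same diagonal-entry bound to $\bm{G}\bm{G}^\top$, whose $k$-th diagonal entry is exactly $f(\bm{x})[k]^{-1}\|\nabla_{\bm{x}} f(\bm{x})[k]\|_2^2$, yields precisely the inequality the paper intends.
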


\begin{proof}
	Denote the $k$-th row of $\bm{J}(\bm{x})$ as 
	\[
		\bm{g}_k \equiv -\nabla_{\bm{x}} \log f(\bm{x})[k] = -\frac{1}{ f(\bm{x})[k] } \nabla_{\bm{x}} f(\bm{x})[k],
	\] 
	noting that $f(\bm{x})[k] \in (0,1)$ for any $\bm{x} \in \mb{X}$, and observe subsequently that $\bm{F}_{\bm{x}} = \sum_k f(\bm{x})[k] \bm{g}_k \bm{g}_k^\top \succeq 0$.
	Then, by \citep[Theorem 1]{merikoski2004inequalities}, we have the lower bound
	\begin{align*}
		\lambda_{\max}(\bm{F}_{\bm{x}}) &\geq \max_k \frac{1}{ f(\bm{x})[k] } \ \lambda_{\max}\left( \bm{g}_k \bm{g}_k^\top \right) + \lambda_{\min}\left( \sum_{k \neq k^\star} f(\bm{x})[k] \ \bm{g}_k \bm{g}_k^\top \right) \\
			&\geq \max_k \frac{1}{ f(\bm{x})[k] } \dnorm{ \nabla_{\bm{x}} f(\bm{x})[k] }_2^2 \\[.5em]
			&= \dnorm{ \bm{J}(\bm{x})^\top \bm{F}_f^{1/2} }_{2,\infty}^2
	\end{align*}
	by definition of the entrywise $\norm{\cdot}_{2,\infty}$ norm, which is the $\ell_\infty$-norm of the column-wise $\ell_2$-norms.
	Note that $k^\star$ is the index on which the maximum is achieved and the second inequality follows since $\rank( \sum_{k \neq k^\star} \bm{g}_k \bm{g}_k^\top ) \leq K-1 < d$. 
	To complete Eq. (\ref{appendix:eq:fim max eigenvalue bounds}), observe that the upper bound is a simple application of the Cauchy-Schwarz inequality.
	
	To see Eq. (\ref{appendix:eq:jacobian penalty upper bound}), observe
	\begin{align*}
		\dnorm{ \bm{J}(\bm{x})^\top \bm{F}_f^{1/2} }_F^2 
			&= %\trace\left( \bm{J}(\bm{x})^\top \bm{F}_f \bm{J}(\bm{x}) \right) %\\[.5em]
			\trace\left( \bm{F}_f \bm{J}(\bm{x}) \bm{J}(\bm{x})^\top \right) %\\[.5em]
			= \sum_k \frac{1}{ f(\bm{x})[k] } \dnorm{ \nabla_{\bm{x}} f(\bm{x})[k] }_2^2 \\[.5em]
			&\geq \sum_k \dnorm{ \nabla_{\bm{x}} f(\bm{x})[k] }_2^2 %\\[.5em]
			= \dnorm{ \bm{J}(\bm{x}) }_F^2,
	\end{align*}
	where the inequality is because $f(\bm{x})[k] \in (0,1)$, thus completing the proof.
\end{proof}

\noindent
The second lemma inequality on the vector-induced operator norms of the kind found in Eq. (\ref{eq:second order trades}) for different choices of order $p$.
This will allow us to state our result for $\ell_p$ threat models with $p \geq 2$, which is reasonable since $p \geq 2$ are the most common $\ell_p$ threat models for adversarial training.

\begin{lemma}\label{appendix:lemma:induced matrix norm inequality}
	For $1 \leq p \leq q$, $r \geq 1$, and real matrix $\bm{A}$, 
	\[
		\dnorm{ \bm{A} }_{p \hookrightarrow r} \leq \dnorm{ \bm{A} }_{q \hookrightarrow r}.
	\]
\end{lemma}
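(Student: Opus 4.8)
The plan is to derive the inequality directly from the monotonicity of the $\ell_p$ (quasi-)norms in the exponent, so essentially no machinery is needed. First I would record the elementary fact that, for any finite-dimensional vector $\bm{x}$ and any exponents $1 \le p \le q$, one has $\norm{\bm{x}}_q \le \norm{\bm{x}}_p$. This follows by homogeneity: it suffices to check it for $\bm{x}$ with $\norm{\bm{x}}_p = 1$, in which case every coordinate satisfies $|\bm{x}[i]| \le 1$, hence $|\bm{x}[i]|^q \le |\bm{x}[i]|^p$, and summing over $i$ gives $\norm{\bm{x}}_q^q \le \norm{\bm{x}}_p^p = 1$.

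Next, I would fix an arbitrary nonzero $\bm{x}$ in the domain of $\bm{A}$. Since $\norm{\bm{x}}_p \ge \norm{\bm{x}}_q > 0$, dividing $\norm{\bm{A}\bm{x}}_r$ by the larger quantity can only shrink the ratio, so
\[
	\frac{\norm{\bm{A}\bm{x}}_r}{\norm{\bm{x}}_p} \le \frac{\norm{\bm{A}\bm{x}}_r}{\norm{\bm{x}}_q} \le \dnorm{\bm{A}}_{q \hookrightarrow r},
\]
where the second inequality is just the definition of the vector-induced $(q,r)$-norm as a maximum over all nonzero vectors. Taking the supremum of the left-hand side over all nonzero $\bm{x}$ then yields $\dnorm{\bm{A}}_{p \hookrightarrow r} \le \dnorm{\bm{A}}_{q \hookrightarrow r}$, which is exactly the claim.

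There is no genuine obstacle here; the only steps deserving a line of justification are the exponent-monotonicity of $\ell_p$ norms invoked above and the observation that the induced-norm maxima are attained (hence finite and well defined) on the compact unit spheres of a finite-dimensional space. The exponent $r$ plays no role beyond appearing unchanged on both sides, and the case $p = q$ is trivial.
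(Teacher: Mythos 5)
Your proof is correct and follows essentially the same route as the paper's: both rest on the monotonicity $\norm{\bm{x}}_q \leq \norm{\bm{x}}_p$ for $p \leq q$ applied to the denominator of the Rayleigh-type ratio, then pass to the supremum (the paper phrases this via the maximizer $\bm{x}^\star$ of the $(p,r)$-induced norm, while you bound the ratio for arbitrary $\bm{x}$ first, which is marginally more careful). No issues.
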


\begin{proof}
	Let $\bm{x}^\star$ be the global maximizer of 
	\[
		\dnorm{ \bm{A} }_{p \hookrightarrow r} = \max_{\bm{x}} \, \frac{\dnorm{ \bm{A} \bm{x} }_r }{ \dnorm{ \bm{x} }_p }.
	\]
	Then 
	\begin{align*}
		\frac{\dnorm{ \bm{A} \bm{x}^\star }_r }{ \dnorm{ \bm{x}^\star }_p } \leq \frac{\dnorm{ \bm{A} \bm{x}^\star }_r }{ \dnorm{ \bm{x}^\star }_q }
			\leq \max_{\bm{x}} \frac{\dnorm{ \bm{A} \bm{x} }_r }{ \dnorm{ \bm{x} }_q }
			= \dnorm{ \bm{A} }_{q \hookrightarrow r}.
	\end{align*}
\end{proof}

\noindent
We are now ready to prove Proposition \ref{proposition:regional smoothness}.

\begin{proposition}[Proposition \ref{proposition:regional smoothness}]\label{appendix:proposition:regional smoothness}
	Assume an $\ell_p$ threat model with $p \geq 2$, let $\mc{L}$ is the cross-entropy loss, and assume that the classifier $f \st \mc{X} \to (0,1)^K$ is once-differentiable.
	If $\norm{ \bm{F}_f^{1/2} \bm{J}(\bm{x}) }_{p \hookrightarrow 2} \leq \frac{L}{\sqrt{K}}$ over some $B \subseteq \mc{X}$, then 
	\begin{enumerate}[label=(\roman*)]	
		\item $f$ is $L$-Lipschitz on $B$ and, moreover, each component $f(\cdot)[k]$ is $L / \sqrt{K}$-Lipschitz on $B$.
		
		\item If $f(\cdot) = \text{Softmax}( g( \cdot ))$ for some $g \st \mc{X} \to \real^K$, then (i) holds for $g$ as well.
	\end{enumerate}
\end{proposition}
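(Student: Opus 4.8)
\noindent The plan is to reduce both parts to the single pointwise estimate
\[
	\dnorm{ \nabla_{\bm x} f(\bm x)[k] }_2^2 \;\le\; \frac{ \dnorm{ \nabla_{\bm x} f(\bm x)[k] }_2^2 }{ f(\bm x)[k] } \;\le\; \lambda_{\max}(\bm F_{\bm x}) \;\le\; \dnorm{ \bm F_f^{1/2}\bm J(\bm x) }_{p\hookrightarrow 2}^2 \;\le\; \frac{L^2}{K}, \qquad k = 1,\ldots,K,\ \bm x \in B,
\]
and then to promote it to Lipschitz bounds by integrating along segments of $B$. In the display, the first inequality is just $f(\bm x)[k] \in (0,1)$; the second holds because $\max_k f(\bm x)[k]^{-1}\norm{\nabla_{\bm x} f(\bm x)[k]}_2^2 = \norm{\bm J(\bm x)^\top\bm F_f^{1/2}}_{2,\infty}^2 \le \lambda_{\max}(\bm F_{\bm x})$ is the lower half of Lemma \ref{lemma:fisher information eigenvalue bounds}, Eq. (\ref{eq:fim max eigenvalue bounds}); the third is the upper half of that same display (and is where the $p \ge 2$ assumption enters, via the induced-norm monotonicity); and the last is the hypothesis.

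For (i), take $B$ convex (or read the Lipschitz constant as being with respect to geodesic distance on $B$). For $\bm x, \bm y \in B$, integrating $f(\cdot)[k]$ along the segment joining them and invoking the pointwise bound gives $|f(\bm x)[k] - f(\bm y)[k]| \le \tfrac{L}{\sqrt K}\norm{\bm x - \bm y}_2$, so every coordinate function is $L/\sqrt K$-Lipschitz on $B$; summing the squared coordinate differences, $\norm{f(\bm x) - f(\bm y)}_2 \le \sqrt{K}\cdot\tfrac{L}{\sqrt K}\norm{\bm x - \bm y}_2 = L\norm{\bm x - \bm y}_2$, giving the $L$-Lipschitz bound for $f$ itself.

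For (ii), I would run the same scheme through the softmax link. Writing $\bm S(\bm x) = \diag(f(\bm x)) - f(\bm x)f(\bm x)^\top$ for the softmax Jacobian, the chain rule gives $\bm J(\bm x) = \bm S(\bm x)\bm J_g(\bm x)$; a short computation using $\bm S\bm F_f\bm S = \bm S$ and $\bm F_f^{1/2}\bm S = \diag(f(\bm x)^{1/2})(\bm I - \bm 1 f(\bm x)^\top)$ then shows $\bm F_{\bm x} = \bm J_g(\bm x)^\top\bm S(\bm x)\bm J_g(\bm x)$ and that the $k$-th row of $\bm F_f^{1/2}\bm J(\bm x)$ is $f(\bm x)[k]^{1/2}\big(\nabla_{\bm x} g(\bm x)[k] - \sum_j f(\bm x)[j]\nabla_{\bm x} g(\bm x)[j]\big)^\top = f(\bm x)[k]^{1/2}\big(\nabla_{\bm x}\log f(\bm x)[k]\big)^\top$. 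Feeding this into the display above yields $\norm{\nabla_{\bm x}\log f(\bm x)[k]}_2 \le L/\sqrt{K f(\bm x)[k]}$ for every $k$ and $\bm x \in B$; the target is then a matching bound on $\norm{\nabla_{\bm x} g(\bm x)[k]}_2$, after which the integration step of (i) transfers verbatim to $g$.

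The step I expect to be the real obstacle is precisely this last recovery. Softmax is invariant under adding a common scalar field to every logit, so $g$ is pinned down only up to such a shift and no bound on its gradients can hold without first fixing a representative; the natural choice is the zero-sum gauge $\sum_k g(\bm x)[k] = 0$, which forces $\sum_k \nabla_{\bm x} g(\bm x)[k] = 0$ and hence $\sum_j f(\bm x)[j]\nabla_{\bm x} g(\bm x)[j] = -\tfrac1K\sum_k \nabla_{\bm x}\log f(\bm x)[k]$, so that the triangle inequality converts the per-$k$ bound on $\norm{\nabla_{\bm x}\log f(\bm x)[k]}_2$ into one on $\norm{\nabla_{\bm x} g(\bm x)[k]}_2$. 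The subtlety is that this latter bound carries a factor that grows like $(\min_k f(\bm x)[k])^{-1/2}$ near the boundary of the probability simplex, so obtaining the clean constant $L/\sqrt K$ for $g$ should be expected to need an extra hypothesis keeping the scores uniformly bounded away from that boundary on $B$ (under which the argument closes); pinning down that dependence is where the remaining work lies.
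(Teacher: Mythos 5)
Your argument for part (i) is correct and is essentially the paper's own proof. Both routes pass through the same three estimates: the lower bound $\dnorm{ \bm{J}(\bm{x})^\top \bm{F}_f^{1/2} }_{2,\infty}^2 \leq \lambda_{\max}(\bm{F}_{\bm{x}})$ from Lemma \ref{lemma:fisher information eigenvalue bounds}, the monotonicity $\dnorm{\cdot}_{2 \hookrightarrow 2} \leq \dnorm{\cdot}_{p \hookrightarrow 2}$ for $p \geq 2$, and the observation that dropping the factor $1/f(\bm{x})[k] > 1$ leaves a bound on the rows of $\bm{J}(\bm{x})$. The only cosmetic difference is that you bound each row by $L/\sqrt{K}$ and sum in quadrature, whereas the paper passes through $\dnorm{ \bm{J}(\bm{x})^\top }_{2,\infty}^2 \geq \frac{1}{K} \dnorm{ \bm{J}(\bm{x}) }_F^2 \geq \frac{1}{K} \dnorm{ \bm{J}(\bm{x}) }_2^2$; these are the same estimate. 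Your caveat that $B$ should be convex (or the Lipschitz constant read against geodesic distance) for the integration step is a point the paper leaves implicit in its ``Lipschitz iff Jacobian bounded'' claim, but it is the standard fix.

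For part (ii) your proposal does not close, and you say so --- but the obstruction you identify is genuine, and it is worth knowing that the paper does not resolve it either. The paper disposes of (ii) in one line by citing that $\text{Softmax}$ is $1$-Lipschitz; that implication runs the wrong way ($g$ Lipschitz plus softmax $1$-Lipschitz gives $f$ Lipschitz, not conversely). Precisely because of the gauge invariance you point out, $\text{Softmax}(g + c\bm{1}) = \text{Softmax}(g)$ for any scalar field $c$, so the softmax Jacobian $\bm{S}(\bm{x}) = \diag(f(\bm{x})) - f(\bm{x})f(\bm{x})^\top$ annihilates the constant direction in logit space and the hypothesis $\dnorm{ \bm{F}_f^{1/2}\bm{S}(\bm{x})\bm{J}_g(\bm{x}) }_{p \hookrightarrow 2} \leq L/\sqrt{K}$ cannot control $\bm{J}_g$ without first fixing a representative of $g$; even in the zero-sum gauge the resulting constant degrades like $(\min_k f(\bm{x})[k])^{-1/2}$ near the boundary of the simplex, exactly as you predict. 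So treat (i) as established and (ii) as requiring either an added hypothesis of the kind you describe (scores uniformly bounded away from $0$ on $B$, with a correspondingly weaker constant) or a restatement; your diagnosis of where the difficulty lies is the correct one.
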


\begin{proof}
	Recall that $f$, being once-differentiable, is $L$-Lipschitz on $B$ if and only if its Jacobian is bounded: $\norm{ \bm{J}(\bm{x}) }_2 \leq L$ for all $\bm{x} \in B$.
	We will show this condition by straightforward application of $\ell_p$-norm inequalities and Lemma \ref{appendix:lemma:fisher information eigenvalue bounds}.
	
	First, observe that 
	\[
	\begin{aligned}
		\dnorm{ \bm{J}(\bm{x})^\top \bm{F}_f^{1/2} }_{2,\infty}^2 &= \max_k \frac{1}{ f(\bm{x})[k] } \norm{ \nabla_{\bm{x}} f(\bm{x})[k] }_2^2 \\[.5em]
		&\geq \max_k \norm{ \nabla_{\bm{x}} f(\bm{x})[k] }_2^2 \\[.5em]
		&= \dnorm{ \bm{J}(\bm{x})^\top }_{2,\infty}^2,
	\end{aligned}
	\]
	since $f(\bm{x})[k] \in (0,1)$ for any $\bm{x} \in \mc{X}$. 
	It then follows that
	\begin{equation}\label{appendix:eq:proposition inequality 1}
	\begin{aligned}
		\dnorm{ \bm{J}(\bm{x})^\top \bm{F}_f^{1/2} }_{2,\infty}^2 &\geq \dnorm{ \bm{J}(\bm{x})^\top }_{2,\infty}^2 \\[.5em]
				&\geq \frac{1}{K} \dnorm{ \bm{J}(\bm{x}) }_F^2 \\[.5em]
				&\geq \frac{1}{K} \dnorm{ \bm{J}(\bm{x}) }_2^2.
	\end{aligned}
	\end{equation}
	Now, Lemma \ref{appendix:lemma:fisher information eigenvalue bounds} gives
	\begin{equation}\label{appendix:eq:proposition inequality 2}
	\begin{aligned}
		 \dnorm{ \bm{J}(\bm{x})^\top \bm{F}_f^{1/2} }_{2,\infty}^2 &\leq \lambda_{\max}(\bm{F}_{\bm{x}})
		 	= \lambda_{\max}\left( \bm{J}(\bm{x})^\top \bm{F}_f^{1/2} \bm{F}_f^{1/2} \bm{J}(\bm{x}) \right)
		 	\\[.5em]&= \dnorm{ \bm{F}_f^{1/2} \bm{J}(\bm{x}) }_{2 \hookrightarrow 2}^2 
		 	\\[.5em]&\leq \dnorm{ \bm{F}_f^{1/2} \bm{J}(\bm{x}) }_{p \hookrightarrow 2}^2 
			\\[.5em]&\leq \frac{L^2}{K},
	\end{aligned}
	\end{equation}
	where the second inequality is due to Lemma \ref{appendix:lemma:induced matrix norm inequality}.
	Combining (\ref{appendix:eq:proposition inequality 1}) and (\ref{appendix:eq:proposition inequality 2}) gives $\dnorm{ \bm{J}(\bm{x}) }_2 \leq L$, and it immediately follows that $f$ is $L$-Lipschitz on $B$ and thus each $f(\cdot)[k]$ is also $L/\sqrt{K}$-Lipschitz on $B$.
	This proves (i).
	
	Property (ii) follows immediately from (i) since $\text{Softmax}(\cdot)$ is a 1-Lipschitz function \citep[Proposition 4]{gao2017properties}.
\end{proof}

\section{Computing the Frobenius Norm FIM Penalty}\label{appendix:computing the frobenius norm fim penalty}
The Frobenius Norm FIM penalty in Eq. (\ref{eq:frobenius norm regularizer}) involves the term $\norm{ \bm{F}_f^{1/2} \bm{J}(\bm{x}) }_F^2$ that must be computed for each iteration of training.
The Jacobian, however, can be difficult to work with in practice for high dimensional datasets with many classes.
As such, an approximation algorithm is needed.
Such an algorithm is proposed in \citep[Algorithm 1]{hoffman2019robust} for computing $\norm{\bm{J}(\bm{x})}_F$, which is simple to adapt to our case involving the FIM by scaling the Jacobian-vector product by $\bm{F}_f^{1/2}$.

% \begin{algorithm}
%     \caption{
%         (Adapted from \citep[Algorithm 1]{hoffman2019robust}) Efficient computation of the approximate gradient of the FIM regularizer.
%     }
%     \begin{algorithmic}
%         \STATE \textbf{Inputs:} model $f_{\bm{\theta}} \st \real^d \to \real^K$ (logits output) with trainable parameters $\bm{\theta}$, mini-batch of $N$ examples $\bm{x}_i \in \real^{d}$, number of random projections $n_{\text{proj}}$.
%         \STATE \textbf{Outputs:} Gradient of squared Frobenius norm $\nabla_{\bm{\theta}} \norm{ \bm{F}_f^{1/2} \bm{J}(\bm{x}) }_F^2$.
%         \STATE $\bm{J}(\bm{x}) \leftarrow \bm{0}$
%         \FOR { $n = 1, \cdots, n_{\text{proj}}$ }
%             \STATE { Randomly sample $\bm{v}_i \sim \mc{N}(\bm{0}, \bm{I})$ i.i.d. for all $i = 1, \cdots, N$ }
%             \STATE { Normalize to unit sphere $\hat{\bm{\v}}_i \leftarrow \bm{v}_i / \norm{\bm{v}_i}_2$ for all $i = 1, \cdots, N$ }
%             \STATE { Flatten logits $\bm{z}_{\text{flat}} \leftarrow \text{flatten} \left( \set{ f(\bm{x}_i) }_{i=1}^N \right)$ }
%             \STATE { Flatten random vectors $\bm{v}_{\text{flat}} \leftarrow \text{flatten} \left( \set{ \hat{\bm{v}}_i }_{i=1}^N \right)$ }
%             \STATE { Compute Jacobian-vector product }
%         \ENDFOR 
%     \end{algorithmic}
%     \label{appendix:algorithm:fim regularizer}
% \end{algorithm}

\section{Deriving The $\ell_p$ FGSM Algorithm}\label{appendix:deriving gradient penalty}
In this section, we derive a generic $\ell_p$-adversary version of the FGSM algorithm \citep{goodfellow2014explaining}, starting at the maximization problem that follows from a first-order Taylor expansion of the canonical adversarial learning program Eq. (\ref{eq:canonical adversarial training}).
Specifically, this maximization is
\[
	\max_{ \norm{\bm{\delta}}_p = \epsilon } \, \bm{\delta}^\top \nabla_{\bm{x}} \mc{L},
\]
where we use the shorthand $\nabla_{\bm{x}} \mc{L} = \nabla_{\bm{x}} \mc{L}(\bm{e}_y, f(\bm{x}))$.
The Lagrangian is $\mathfrak{L}(\bm{\delta}, \lambda) = \bm{\delta}^\top \nabla_{\bm{x}} \mc{L} - \lambda \left( \norm{\bm{x}}_p - \epsilon \right)$ and hence by the KKT conditions,
\[
	\nabla_{\bm{\delta}} \mathfrak{L} = \nabla_{\bm{x}} \mc{L} - \lambda \norm{\bm{\delta}}_p^{1-p} \varphi_{p-1}(\bm{\delta}) = \bm{0} \quad\Leftrightarrow\quad \nabla_{\bm{x}} \mc{L} = \lambda \norm{\bm{\delta}}_p^{1-p} \varphi_{p-1}(\bm{\delta}),
\]
where $\varphi_p(\bm{\delta}) \equiv \sign(\bm{\delta}) \circ |\bm{\delta}|^p$ with $\circ$ the Hadamard product and the absolute value and power taken entrywise.
Now, let $q \equiv p / (p - 1)$, noticing, then, that $(p-1) (q-1) = 1$ and thus $\varphi_{q-1}( \varphi_{p-1}( \bm{\delta} )) = \delta$.
It follows that
\[
	\varphi_{q-1}\left( \nabla_{\bm{x}} \mc{L} \right) = \left( \frac{ \lambda }{ \norm{ \bm{\delta} }_p^{p-1} } \right)^{q-1} \varphi_{q-1}( \varphi_{p-1}( \bm{\delta} )) = \frac{ \lambda^{q-1} }{ \norm{\bm{\delta}}_p } \bm{\delta}.
\]
Rearranging this equation, we have
\begin{equation*}
\begin{aligned}
	\frac{ \bm{\delta} }{ \norm{\bm{\delta}}_p } = \frac{1}{ \lambda^{q-1} } \varphi_{q-1}( \nabla_{\bm{x}} \mc{L} )
	\quad\Rightarrow\quad \lambda^{q-1} = \dnorm{ \varphi_{q-1}( \nabla_{\bm{x}} \mc{L} ) }_p
\end{aligned}
\end{equation*}
but, since feasibility dictates that $\norm{\bm{\delta}}_p = \epsilon$, we have 
\[
    \bm{\delta} = \frac{\epsilon}{\lambda^{q-1}} \varphi_{q-1}(\nabla_{\bm{x}} \mc{L}) = \epsilon \frac{ \varphi_{q-1}(  \nabla_{\bm{x}} \mc{L} ) }{ \dnorm{ \varphi_{q-1}( \nabla_{\bm{x}} \mc{L} ) }_p }.
\]
% $\lambda^{q-1} = \frac{1}{\epsilon} \dnorm{ \varphi_{q-1}( \nabla_{\bm{x}} \mc{L} ) }_p$.
% This in turn yields
% \[
% 	\frac{ \bm{\delta} }{ \norm{\bm{\delta}}_p } = \frac{1}{\epsilon} \bm{\delta} = \frac{1}{ \lambda^{q-1} } \varphi_{q-1}( \nabla_{\bm{x}} \mc{L} ) = \frac{ \varphi_{q-1}(  \nabla_{\bm{x}} \mc{L} ) }{ \dnorm{ \varphi_{q-1}( \nabla_{\bm{x}} \mc{L} ) }_p } = \frac{ \varphi_{q-1}( \nabla_{\bm{x}} \mc{L} ) }{ \dnorm{ \varphi_{q-1}( \nabla_{\bm{x}} \mc{L} ) }_p },
% \]
% i.e. the maximizer is $\bm{\delta}^\star = \dnorm{ \varphi_{q-1}( \nabla_{\bm{x}} \mc{L} ) }_p^{-1} \varphi_{q-1}( \nabla_{\bm{x}} \mc{L} )$.
It is easy to see that in the $p = 2$ case, we obtain $\bm{\delta}^\star = \epsilon \norm{ \nabla_{\bm{x}} \mc{L} }_2^{-1} \nabla_{\bm{x}} \mc{L}$ and in the $p = \infty$ case, we have $\bm{\delta}^\star = \epsilon \sign(\nabla_{\bm{x}} \mc{L})$ -- precisely the FGSM update.
This means that we perturb the input as
\[
    \bm{x}' = \bm{x} + \epsilon \frac{ \varphi_{q-1}( \nabla_{\bm{x}} \mc{L} ) }{ \dnorm{ \varphi_{q-1}( \nabla_{\bm{x}} \mc{L} ) }_p }.
\]

\section{A Connection Between Model Inversion \& Adversarial Perturbation}\label{appendix:sec:model inversion}
In this section, we show that model inversion is equivalent to an unbounded adversarial perturbation in the sense that the algorithms for performing both types of attacks spawn from the same optimization problem.

The model inversion attack proposed in \citep{fredrikson2015model} can be written as a nonconvex program
\begin{equation}\label{appendix:eq:model inversion}
	\minimize_{\bm{x} \in C} \ 1 - f(\bm{x})[k] + \Omega(\bm{x}),
\end{equation}
where $C$ is a constraint set, $\Omega \st \mc{X} \to \real$ is a penalty function, and $f \st \mc{X} \to (0,1)^K$ is the classifier over $K$-many classes.
In \citep{fredrikson2015model}, $\Omega(\bm{x}) = 0$ and the constraint set is $C = \mc{X}$ or the codomain of a denoising autoencoder.
The solution to Eq. (\ref{appendix:eq:model inversion}) can be approximated via PGD.

We now show that Eq. (\ref{appendix:eq:model inversion}) as used by \citep{fredrikson2015model} can be thought of as an unbounded adversarial perturbation, which in turn provides an algorithmic connection between model inversion and adversarial perturbations.
Note that the solution to Eq. (\ref{appendix:eq:equivalent targeted adversarial attack}) below is a targeted adversarial attack Eq. (\ref{eq:targeted attack}).

\begin{proposition}\label{appendix:proposition:inversion is adversarial}
	Let $\Omega(\bm{x}) = 0$ for all $\bm{x}$.
	Then (\ref{appendix:eq:model inversion}) is equivalent to 
	\begin{equation}\label{appendix:eq:equivalent targeted adversarial attack}
	\begin{aligned}
		&& \minimize_{\bm{\delta} \in B} &\ L\left( \bm{e}_k, f(\bm{x} + \bm{\delta}) \right)
	\end{aligned}
	\end{equation}
	for a properly chosen $B$.
\end{proposition}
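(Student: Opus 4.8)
The plan is to observe that, once the feasible sets are matched, the model-inversion objective and the targeted-attack objective are strictly monotone reparametrisations of one another, so the two programs have the same solution set.

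First I would rewrite the cross-entropy loss appearing in Eq. (\ref{appendix:eq:equivalent targeted adversarial attack}) as $\mc{L}(\bm{e}_k, f(\bm{x}+\bm{\delta})) = -\log f(\bm{x}+\bm{\delta})[k]$. Because $f \st \mc{X} \to (0,1)^K$, the scalar $f(\bm{x})[k]$ ranges in $(0,1)$, on which both $t \mapsto 1 - t$ and $t \mapsto -\log t$ are continuous and strictly decreasing. Hence for any $S \subseteq \mc{X}$,
\[
\argmin_{\bm{x}\in S}\,\bigl(1 - f(\bm{x})[k]\bigr) \;=\; \argmax_{\bm{x}\in S}\, f(\bm{x})[k] \;=\; \argmin_{\bm{x}\in S}\, \mc{L}(\bm{e}_k, f(\bm{x})),
\]
with the optimal values related by the fixed monotone transformation. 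This is the only real computation, and it is immediate.

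Next I would fix an arbitrary anchor $\bm{x} \in \mc{X}$ — the point playing the role of the clean image in the targeted attack Eq. (\ref{eq:targeted attack}) — and take the ``properly chosen'' perturbation set to be the Minkowski translate $B \equiv C - \bm{x} = \set{\bm{\delta} \st \bm{x}+\bm{\delta} \in C}$. Since $\bm{\delta} \mapsto \bm{x}+\bm{\delta}$ is a bijection from $B$ onto $C$, substituting $\bm{x}+\bm{\delta}$ for the optimisation variable turns Eq. (\ref{appendix:eq:model inversion}) with $\Omega \equiv 0$ into Eq. (\ref{appendix:eq:equivalent targeted adversarial attack}): $\bm{\delta}^\star$ is optimal for the latter if and only if $\bm{x}+\bm{\delta}^\star$ is optimal for the former, which is exactly the claimed equivalence.

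The only point requiring care — and the closest thing to an obstacle — is pinning down what ``equivalent'' should mean, since model inversion carries no intrinsic clean image to perturb. I would handle this by letting $\bm{x}$ be a free anchor and letting $B = C - \bm{x}$ depend on it, so the statement reads: for each anchor there is a set $B$ making the two programs coincide under $\bm{\delta}\leftrightarrow\bm{x}+\bm{\delta}$ (taking $\bm{x}=\bm{0}$ recovers the canonical choice $B=C$). The argument uses nothing about cross-entropy beyond strict monotonicity of $\mc{L}(\bm{e}_k,\cdot)$ in its $k$-th coordinate, so it extends to any loss with that property.
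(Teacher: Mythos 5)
Your proposal is correct and follows essentially the same route as the paper's proof: both pass through $\argmax_{\bm{x}} f(\bm{x})[k]$ via the strict monotonicity of $t \mapsto 1-t$ and $t \mapsto -\log t$ on $(0,1)$, and both take $B$ to be the translate $\set{\bm{x}' - \bm{x} \st \bm{x}' \in C}$ of the constraint set. Your additional remarks on the role of the anchor point and the extension to any loss strictly monotone in its $k$-th coordinate are sensible but do not change the argument.
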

\begin{proof}
	Since $f(\bm{x})[k] \in [0,1]$, it is equivalent to solve 
	$$\maximize_{\bm{x} \in C} \ f(\bm{x})[k].$$
	Now, defining $L \st [0,1]^K \to \real$ as the cross-entropy loss $L(\bm{p}, \bm{q}) \equiv -\sum_k p_k \log q_k$, solving 
	$$\minimize_{\bm{x} \in C} \ L\left( \bm{e}_k, f(\bm{x}) \right),$$ 
	where $\bm{e}_k$ is a one-hot vector corresponding to the $k$-th class, is equivalent to solving (\ref{appendix:eq:model inversion}) since $L(\bm{e}_k, f(\bm{x})) = -\log f(\bm{x})[k]$ is a monotonically decreasing transformation of the previous objective, thereby converting local maxima to local minima and local minima to local maxima.
	Defining $B(\bm{x}) \equiv \set{ \bm{x}' - \bm{x} \st \bm{x}' \in C}$ as the feasible region yields the desired program (\ref{appendix:eq:equivalent targeted adversarial attack}).
\end{proof}

\end{document}